\documentclass[runningheads]{llncs}

\usepackage[T1]{fontenc}
\usepackage{graphicx}
\usepackage[hyphens]{url}
\usepackage{hyperref}
\usepackage{xcolor}

\usepackage{xr-hyper}

\usepackage{amsmath}
\usepackage{amssymb}
\usepackage{amsfonts}
\usepackage{subcaption}

\usepackage{algorithm}
\usepackage{algorithmic}

\usepackage{wrapfig}

\usepackage{thmtools}
\usepackage{thm-restate}

\usepackage{newfloat}
\usepackage{listings}
\DeclareCaptionStyle{ruled}{labelfont=normalfont,labelsep=colon,strut=off}
\floatstyle{ruled}
\newfloat{listing}{tb}{lst}{}
\floatname{listing}{Listing}

\def\CX{\mathcal{X}} 
\def\CA{\mathcal{A}}

\def\CZ{\mathcal{Z}}
\def\Zspace{\mathcal{Z}}

\newcommand{\emathbb}[1]{\ensuremath{\underset{{#1}}{\mathbb E}}}
\def\topo{\tau}

\def\policy{\pi}

\def\policyfstar{\policy^{*}}

\def\hbar{\bar{h}}

\newcommand{\probz}[1]{\ensuremath{\mathbb{P}_{Z}({#1})}}

\DeclareMathOperator*{\argmax}{arg\,max}

\title{Open-loop POMDP Simplification and Safe Skipping of Replanning with Formal Performance Guarantees}
\titlerunning{Open-loop POMDP Simplification with Formal Guarantees}

\author{Da Kong \and Vadim Indelman}
\authorrunning{D. Kong and V. Indelman}
\institute{Technion - Israel Institute of Technology \\ \email{da-kong@campus.technion.ac.il, vadim.indelman@technion.ac.il}}

\date{}

\begin{document}

\maketitle

\begin{abstract}
Partially Observable Markov Decision Processes (POMDPs) provide a principled mathematical framework for decision-making under uncertainty. However, the exact solution to POMDPs is computationally intractable. In this paper, we address the computational intractability by introducing a novel framework for adaptive open-loop simplification with formal performance guarantees. Our method adaptively interleaves open-loop and closed-loop planning via a topology-based belief tree, enabling a significant reduction in planning complexity. The key contribution lies in the derivation of efficiently computable bounds which provide formal guarantees and can be used to ensure that our simplification can identify the immediate optimal action of the original POMDP problem. Our framework therefore provides computationally tractable performance guarantees for macro-actions within POMDPs. Furthermore, we propose a novel framework for safely skipping replanning during execution, supported by theoretical guarantees on multi-step open-loop action sequences. To the best of our knowledge, this framework is the first to address skipping replanning with formal performance guarantees. Practical online solvers for our proposed simplification are developed, including a sampling-based solver and an anytime solver.
Empirical results demonstrate substantial computational speedups while maintaining provable performance guarantees, advancing the tractability and efficiency of POMDP planning.

\end{abstract}

  \section{Introduction}
Partially Observable Markov Decision Processes (POMDPs) constitute a fundamental mathematical framework for sequential decision-making under uncertainty. Despite the theoretical elegance and broad applicability across diverse domains,
POMDPs suffer from significant computational intractability, known as the \textit{curse of dimensionality} and the \textit{curse of history}.

Extensive research has focused on approximation methods to improve POMDP tractability~\cite{Lee07nips,Kurniawati08rss,Porta06jmlr,Lim23jair,Elimelech22ijrr,LevYehudi24aaai,Barenboim26aij}. Open-loop planning has emerged as a promising approach, reducing the belief tree complexity by eliminating observation branches~\cite{Flaspohler20nips,Amato19jair,He11jair}. 
Unlike closed-loop methods that adapt to observations, open-loop approaches execute predefined action sequences without gathering observations during execution, also known as macro-actions~\cite{Flaspohler20nips}.

However, open-loop planning and macro-actions typically yield suboptimal solutions without any performance guarantees. Although prior work~\cite{Flaspohler20nips} offers bounds for macro-actions, these rely on the computationally intractable Value of Information (VoI), restricting their practical online use.
This limitation exposes a fundamental gap in the existing literature: the absence of tractable performance guarantees for open-loop approximations relative to the original POMDP---a gap that this work aims to address.

Furthermore, conventional online POMDP planners replan after executing only the first action, even in the open-loop approximation setting~\cite{He11jair}. Skipping replanning can be an effective strategy to simplify POMDP planning at the execution level. While recent learning-based approaches have begun to address this challenge~\cite{Honda24icra}, the establishment of formal performance guarantees for safely skipping replanning remains an open problem that we tackle in this work.



Specifically, in this work, we propose a novel framework for adaptive open-loop POMDP simplification at two levels: first, it enables POMDP planning simplification while maintaining performance guarantees that ensure the identification of the same optimal action at the root as the original POMDP; second, it enables safely skipping replanning with formal performance guarantees.


 \begin{wrapfigure}{r}{0.33\textwidth}
    \vspace{-0.6\baselineskip}
    \centering
    \includegraphics[width=0.31\textwidth]{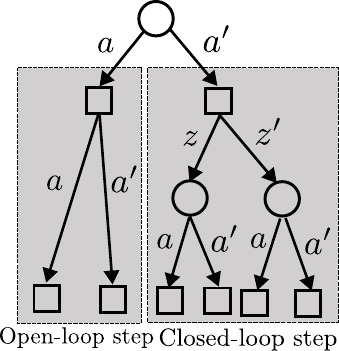}
    \caption{A hybrid belief tree demonstrating the computational advantage of open-loop planning. The left branch employs open-loop action sequences, while the right branch utilizes traditional closed-loop planning.}
    \label{fig:open-loop-hybrid}
    \vspace{-0.8\baselineskip}
\end{wrapfigure}

More specifically, at the planning level, we propose a novel method that adaptively introduces the open-loop mode to some belief nodes, excluding observations to simplify POMDP planning with performance guarantees. As illustrated in Fig.~\ref{fig:open-loop-hybrid}, the incorporation of open-loop planning can dramatically reduce the size of the belief tree, which directly corresponds to a reduction in planning complexity.
Our method adaptively chains open-loop and closed-loop steps.
We derive novel computationally tractable bounds that relate this simplified POMDP to the original POMDP. Importantly, our bounds only depend on the simplified problem, enabling POMDP simplification with computationally tractable performance guarantees. 
This, in turn, enables online adaptation between open-loop and closed-loop branches in the belief tree by utilizing the corresponding bounds. To the best of our knowledge, this constitutes the first work to provide computationally tractable formal guarantees for incorporating open-loop planning, and equivalently, macro-actions into POMDPs.


At the execution level, we propose the first framework for skipping replanning in POMDPs with formal performance guarantees.  
Specifically, we establish novel bounds on the posterior $Q$-values at future planning sessions based solely on the current planning session. We show that whenever these bounds satisfy certain conditions, the agent can safely skip replanning and execute multi-step open-loop actions that are guaranteed to be identical to the immediate optimal actions of the original POMDP problem, obtained through explicit replanning.

Further, we employ a sampling-based solver and an anytime solver for the derived bounds and demonstrate our method's effectiveness in simulated POMDP environments. Our approach achieves significant computational speedups while maintaining provable performance guarantees, highlighting its potential for real-world applications.

To summarize, our main contributions are as follows:
\begin{itemize}
    \item We introduce the first POMDP simplification framework that adaptively incorporates open-loop steps and provides formal performance guarantees for both planning and execution. 
    \item For planning, we develop novel efficiently computable bounds for introducing adaptive open-loop steps in POMDP planning that yield provable performance guarantees for our simplification method.
    \item For execution, we derive novel bounds for executing multi-step open-loop steps in POMDPs, providing guarantees for safely skipping replanning. 
    \item We develop practical sampling-based and anytime MCTS-style solvers for our simplification framework, namely, AT-SparsePFT and AT-POMCP. 
    They demonstrate substantial computational improvements through empirical evaluation while maintaining theoretical guarantees.
\end{itemize}

\section{Related Work}

For general background on POMDPs in robotics, we refer to surveys~\cite{Kurniawati22ar,Lauri22tro}. This section will focus on two specific aspects most relevant to this work: open-loop planning of POMDP and skipping replanning.

\paragraph{Open-Loop Planning.}
Practical POMDP solvers employ various approximation techniques to address the computational intractability, including approximate belief representations~\cite{Roy05jair,Lee07nips,Lim23jair} and memory-based approximations~\cite{Kara22jmlr,Subramanian19cdc,Patil24aistat}. Beyond approximation, simplification of POMDP with formal performance guarantees is essential for safety-critical tasks, including research on simplifying the observation model~\cite{LevYehudi24aaai} and simplifying the state-observation space~\cite{Barenboim26aij}. 

Within the POMDP literature, open-loop planning is often formulated as \textit{macro-actions}.
He et al.~\cite{He11jair} established the first formal analysis of macro-action planning in POMDPs, providing crucial theoretical foundations for this approach. Subsequent work by Amato et al.~\cite{Amato19jair} extended this framework to decentralized POMDPs, demonstrating the broader applicability of macro-action abstractions in multi-agent systems. 

Recent theoretical advances include Flaspohler et al.'s~\cite{Flaspohler20nips} novel approach leveraging the Value of Information (VoI) to synthesize open-loop action sequences. While theoretically elegant, the substantial computational complexity of VoI calculations presents practical limitations for POMDP simplification. From a robotic motion control perspective, Majumdar et al.~\cite{Majumdar23ijrr} derived performance bounds for robot motion models that implicitly provide guarantees for open-loop planning in POMDPs, though these results do not directly address the POMDP simplification problem.

Contemporary research has shifted toward data-driven methods for macro-action sequence generation~\cite{Lee21rss,Liang23iros}, demonstrating promising empirical results.
However, these learning-based approaches currently lack the theoretical performance guarantees necessary for safety-critical applications.

\paragraph{Skipping Replanning.}
Conventional online planning pipelines employ a cyclical scheme of planning, execution, and replanning. However, the strategic decision of when to forgo replanning has received limited attention. Traditional approaches rely on hand-crafted replanning strategies, which often prove inflexible and suboptimal. Honda et al.~\cite{Honda24icra} recently addressed this gap through deep reinforcement learning, proposing adaptive replanning strategies for dynamic environments. However, their method lacks performance guarantees. To the best of our knowledge, we present the first work with  theoretical analysis of replanning strategies in POMDPs with provable performance bounds.

\section{POMDP Preliminaries}
The basic model of POMDP is defined as a tuple:
$\langle\CX, \CA, \Zspace, \mathbb{P}_{T}, \mathbb{P}_{Z}, r, b_0\rangle$, where $\CX$ is the state space, $\CA$ is the action space, $\Zspace$ is the observation space.
 The transition model (or motion model) is defined as $\mathbb{P}_{T}(x_{k+1}|x_k, a_k)$, which describes the probabilistic transition of the state from $x_k\in\CX$ to $x_{k+1}\in\CX$ under a certain action $a_k \in \CA$. The observation model is defined as $\probz{z_k|x_k}$, which describes the probability of observation $z_k\in\Zspace$ given a certain state $x_k\in\CX$.
 The reward function $r$ is considered to be state-dependent, $r(b,a)=\mathbb{E}_{x|b}[r(x,a)]$, and is  bounded, i.e., $r \in [-R_{\max}, R_{\max}]$. $b_0$ is the initial belief.

Given that the true state is uncertain, a belief is maintained to represent the distribution of the current state given the history.
The belief at any time instant $k$ is defined as $b_k \triangleq P(x_{k}| h_{k}) $, where $h_{k}$ is the history at that time, defined as $h_{k} \triangleq \{z_{1:k},a_{0:k-1}\}$. A propagated history without the latest observation $h_k^{-}$ is defined as $h_k^{-} = \{z_{1:k-1},a_{0:k-1}\}$, and the corresponding propagated belief is $b^-_k \triangleq P(x_k|h^-_k)$.

A deterministic policy function is defined as $\pi:\mathcal{H}\mapsto\CA$, which decides actions based on the history of beliefs. While a stochastic policy maps the history-action pairs to a probability as: $\pi: \mathcal{H}\times\CA\mapsto [0,1]$. The value function for a certain policy $\pi$ over the planning horizon $L$ is defined as the summation of all expected rewards:
$    
V^\pi(b_k)= r(b_k, \pi_k(b_k)) + \sum_{i=k+1}^{k+L}\emathbb{z_{k+1:i}}\big[ r(b_i,\pi_i(b_i))\big].
$
The goal of a POMDP is to find the optimal policy $\pi^*$ that maximizes the value function. The optimal policy can be calculated by the Bellman optimality as:
$
V^{\pi*}(b_k) = \max_{a_k} \Big[ r(b_k, a_k) + \emathbb{z_{k+1}} V^{\pi*}(b_{k+1})\Big].
$

	\section{Open-loop as Simplification of POMDP}
	\label{sec:open-loop-method}
In this section, we present a novel framework for POMDP simplification that adaptively interleaves open-loop and closed-loop planning with formal performance guarantees. We derive computationally tractable bounds wherein both upper and lower bounds can be efficiently computed, depending solely on our simplification method and serving as the foundation for our performance guarantees. We denote the upper and lower bounds as $ub$ and $lb$, 
\begin{align}
	lb({\tau}, b_0, a_0) \leq Q^{\pi*}(b_0,a_0) \leq  ub({\tau}, b_0, a_0),
	\label{eq:ub-lb-simplification}
\end{align}
where $\tau$ is a simplified topology, which can be seen as a control parameter of simplification of the POMDP problem and will be defined later in this section.


\begin{figure}[t]
    \centering
    \begin{subfigure}[b]{0.25\linewidth}
        \centering
        \includegraphics[width=\linewidth]{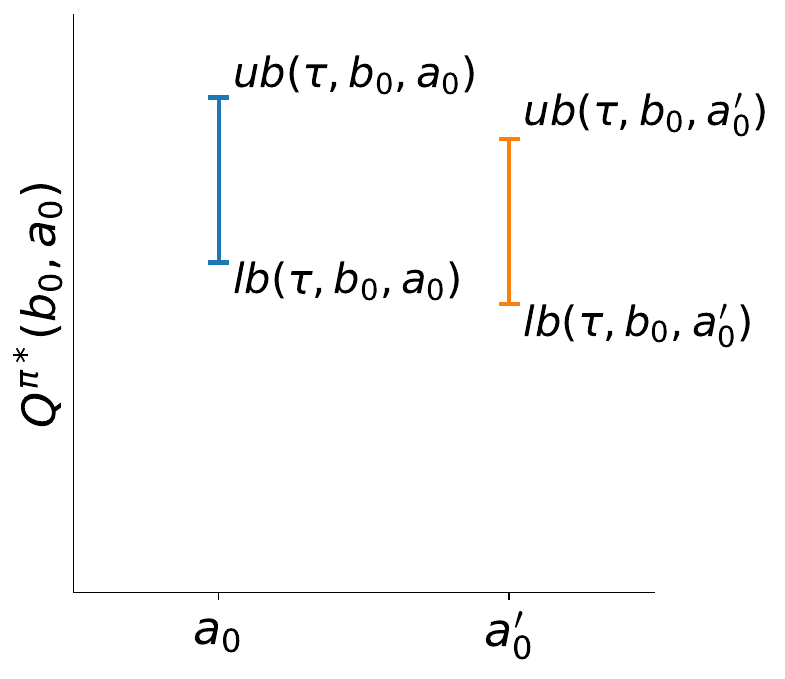}
        \caption{}
        \label{fig:two-guarantees-left}
    \end{subfigure}\quad \qquad
    \begin{subfigure}[b]{0.25\linewidth}
        \centering
        \includegraphics[width=\linewidth]{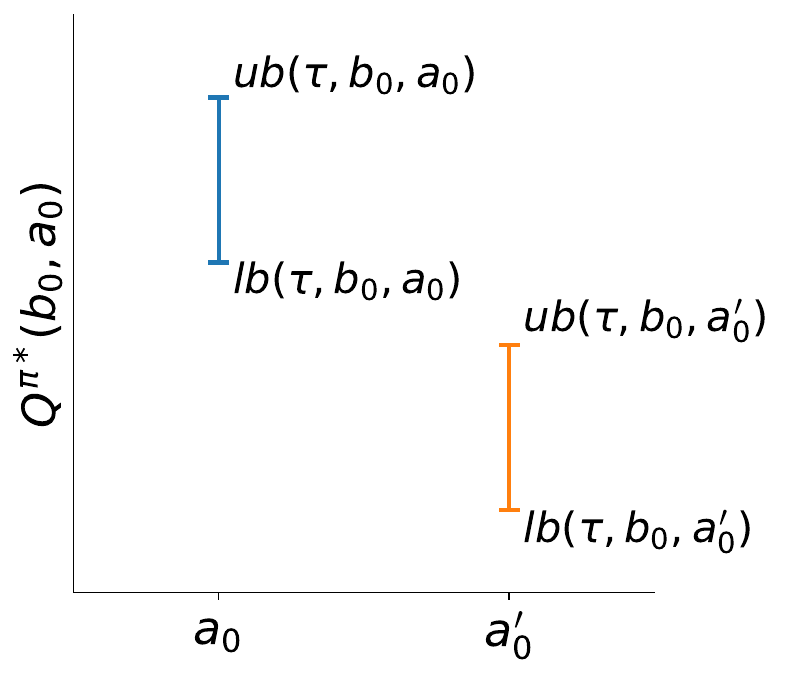}
        \caption{}
        \label{fig:two-guarantees-right}
    \end{subfigure}
    \caption{Illustration of two cases for performance guarantees: (a) overlapping bounds necessitating topology refinement, and (b) non-overlapping bounds allowing for optimal action determination.}
    \label{fig:two-guarantees}
\end{figure}


Such bounds can be used to determine the optimal action $a_0$ of the original POMDP problem in case there is no overlap between the highest lower bound and the second-highest upper bound.  If there is overlap, we need to refine the topology $\tau$ to achieve non-overlapping bounds. Fig.~\ref{fig:two-guarantees} illustrates these two cases.


\subsection{Definition}
\label{subsec:definition}
In POMDP, the policy space for both deterministic and stochastic policies is dependent on the history space. For the common full POMDP,  the history will include the latest observation. In contrast, open-loop planning does not incorporate observations, leading to a simplified history space, which is the starting point of our simplification.

We define a history updater for \textit{single-step} open-loop (OL) planning as $\psi^{OL}(h_k,a_k,z_{k+1})=\{h_k,a_k\}$, for any possible observation $z_{k+1}\in\mathcal{Z}$, history $h_k\in\mathcal{H}_k$, and action $a_k\in\mathcal{A}$.
This is in contrast to the standard \textit{single-step} closed-loop (CL) history updater, which we denote as $\psi^{CL}(h_k,a_k,z_{k+1})=\{h_k,a_k,z_{k+1}\}$. 
By iterating the single-step open-loop history updater, we can get the history for fully open-loop planning as: $h^{FOL}_k=\{a_{0:k-1}\}$. On the other hand, original POMDPs always use the closed-loop history updater, with a history of $h^{Full}_k=\{z_{1:k},a_{0:k-1}\}$. 

In order to incorporate open-loop and closed-loop steps adaptively, we introduce a topology $\tau$ to define the history and belief tree structure. We denote its history as Adaptive Open-Loop (AOL) history, $h_k^{\text{AOL},\tau}$.
The topology $\tau$ is defined as a set of binary indicator functions $\beta^{\text{AOL},\tau}(h_k^{\text{AOL},\tau})$. Specifically, $\beta^{\text{AOL},\tau}(h_k^{\text{AOL},\tau})=1$ indicates that a single-step open-loop planning is adopted for the history $h^-_k$, while $\beta^{\text{AOL},\tau}(h_k^{\text{AOL},\tau})=0$ indicates a single-step closed-loop planning.
The \textit{augmented} history update function $\psi$ is defined as: \
\begin{align}
	\label{eq:history-updater}
	&h^{\text{AOL},\tau}_k =	\psi^{AOL} (\beta^{\text{AOL},\tau}, h^{\text{AOL},\tau}_{k-1}, a_{k-1}, z_k )  \\& =  \begin{cases}
		\psi^{OL}(h^{\text{AOL},\tau}_{k-1},a_{k-1},z_{k}), & \text{if $\beta^{\text{AOL},\tau}(h^{\text{AOL},\tau}_{k-1})=1$}, \\	
		\psi^{CL}(h^{\text{AOL},\tau}_{k-1},a_{k-1},z_{k}), & \text{if $\beta^{\text{AOL},\tau}(h^{\text{AOL},\tau}_{k-1})=0$}. \nonumber
	\end{cases}
\end{align}
Here, $h^{\text{AOL},\tau}_k$ denotes the augmented history at time step $k$ under topology $\tau$, which can be either open-loop or closed-loop depending on the indicator functions $\beta^{\text{AOL},\tau}$.
By recursively applying \eqref{eq:history-updater} from the initial belief to the planning horizon $L$, we construct a complete history space $\mathcal{H}^{\text{AOL}, \tau}_t=\{h_t^{\text{AOL},\tau}\}$ and the corresponding belief tree $\mathbb{T}^{\text{AOL},\tau}=\{\mathcal{H}^{\text{AOL},\tau}_t: 1\leq t\leq L\}$ for topology $\tau$. This formulation encompasses two extreme cases: fully open-loop planning, where $\beta^{\text{AOL},\tau} = 1$ for all nodes, and the original POMDP formulation with fully closed-loop planning, where $\beta^{\text{AOL},\tau} = 0$ for all nodes.


This belief tree topology definition is an extension of~\cite{Kong24isrr}, where the topology is used to switch between original and simplified observation spaces and models. In contrast, we use topology to chain open-loop and closed-loop steps, contributing to an adaptive simplification of history.

Further, the augmented  policy is defined as $\pi^{\text{AOL},\tau}_t: \mathcal{H}^{\text{AOL},\tau}_t \mapsto \mathcal{A}$. It maps augmented histories to actions, enabling adaptive switching between open-loop action sequences and closed-loop policy steps according to the topology $\tau$. The corresponding $Q$-function for adaptive policy is defined as $
Q^{\pi^{\text{AOL},\tau*}}(b_0,a_0)= \mathbb{E}_{x_0|b_0} [ r(x_0,a_0) + \mathbb{E}_{z_1|b_0,a_0}\max_{a_1} Q^{\pi^{\text{AOL},\tau*}}(b(h^{\text{AOL},\tau}_1),a_1) ]$. If $\beta^{\text{AOL},\tau}(h^{\text{AOL},\tau}_1) = 1$, as determined by the adaptive topology, the expectation over $z_1$ cancels out.

We now introduce another level of open-loop planning where the observations are not considered but full observability is assumed.
We introduce the Adaptive Fully-Observable (AFO) policy ${\pi}^{\text{AFO}, \tau}$, which employs topology $\tau$ to control observability—assuming full observability without dependence on observations at designated nodes while maintaining partial observability at others.
The corresponding adaptive history $h^{\text{AFO}, \tau}_t$ is updated according to:
\begin{align}
	    \label{eq:history-update-fo}
    &h^{\text{AFO}, \tau}_t = \psi^{\text{AFO}}(\beta^{\text{AFO},\tau}, h^{\text{AFO}, \tau}_{t-1}, a_{t-1}, z_t) \\
    &= \begin{cases}
        h^{\text{AFO}, \tau}_{t-1} a_{t-1} x_t, & \text{if } \beta^{\text{AFO},\tau}(h^{\text{AFO}, \tau}_{t-1}) = 1, \nonumber \\
		\psi^{CL}(h^{\text{AFO},\tau}_{t-1},a_{t-1},z_{t}), & \text{if } \beta^{\text{AFO},\tau}(h^{\text{AFO}, \tau}_{t-1}) = 0.
    \end{cases}
\end{align}
Here, the indicator function $\beta^{\text{AFO},\tau}(\cdot)$ within the topology $\tau$ determines the observability mode: $\beta^{\text{AFO},\tau} = 1$ denotes full observability at the belief node, while $\beta^{\text{AFO},\tau} = 0$ denotes partial observability. Similarly, the $Q$-function for the AFO policy is defined as \[Q^{{\pi}^{\text{AFO},\tau*}}(b_0,a_0)= \mathbb{E}_{x_0|b_0} [ r(x_0,a_0) +\mathbb{E}_{x_1|b_0,a_0}\mathbb{E}_{z_1|x_1} \max_{a_1} Q^{{\pi}^{\text{AFO},\tau*}}(b(h^{\text{AFO},\tau}_1),a_1) ],\] where if $\beta^{\text{AFO},\tau}(h^{\text{AFO},\tau}_1)= 1$, the expectation of $z_1$ will cancel out but the expectation of $x_1$ will remain since the true state is included in $h^{\text{AFO},\tau}_1$. 



\subsection{Performance Guarantees}
\label{subsec:performance-guarantees}
For the bounds in~\eqref{eq:ub-lb-simplification}, we propose to adopt the $Q$-function of the optimal adaptive open-loop policy $\pi^{\text{AOL},\tau*}$ for the lower bound $lb$, and the optimal adaptive fully-observable policy $\pi^{\text{AFO},\tau*}$ for the upper bound $ub$.
We now present our main result that establishes these upper and lower bounds over the optimal $Q$-function of the original POMDP problem.
\begin{restatable}{theorem}{TheoremOneRestate}
	\label{theorem:open-loop2}  
	Let $\policyfstar$ denote the optimal policy of the original POMDP. Consider some topology $\topo_U$ and $\topo_L$, and denote the topology-dependent optimal augmented open-loop policy as ${\pi}^{AOL,\tau_L*}$. In the same way, we denote the topology-dependent optimal adaptive fully-observable policy as $\pi^{\text{AFO}, \tau_U*}$. Then, 
	\begin{flalign}
	Q^{\pi^{\text{AOL},\tau_L*}}(b_0,a_0) \leq	Q^{\policyfstar}(b_0,a_0)  \leq Q^{\policy^{\text{AFO},\tau_U*}}(b_0,a_0).
	\label{eq:BoundsThm1_a}
	\end{flalign}
\end{restatable}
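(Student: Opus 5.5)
The two inequalities in \eqref{eq:BoundsThm1_a} have different mechanisms, so I will prove them separately: the lower bound by policy-class inclusion, the upper bound by an information (relaxation) argument. Both are by backward induction over the finite horizon $L$, working on the belief tree $\mathbb{T}^{\text{AOL},\tau_L}$ and $\mathbb{T}^{\text{AFO},\tau_U}$ respectively.

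\textbf{Lower bound.} The plan is to show that every AOL policy $\pi^{\text{AOL},\tau_L}$ induces a policy of the original POMDP with the same value. Iterating \eqref{eq:history-updater} from $h_0$ gives a deterministic map $\phi_t:\mathcal{H}^{Full}_t\to\mathcal{H}^{\text{AOL},\tau_L}_t$ that simply deletes the observations $z_t$ at the steps where $\beta^{\text{AOL},\tau_L}=1$. Then $\tilde\pi_t:=\pi^{\text{AOL},\tau_L}_t\circ\phi_t$ is an admissible history-dependent policy of the original POMDP. Because rewards are state-dependent, its value is computed from the true joint law of $(x_{0:L},z_{1:L})$ under the actions taken. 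This law is exactly the one used in $Q^{\pi^{\text{AOL},\tau_L}}$: at open-loop nodes the belief $b(h^{\text{AOL},\tau_L}_t)$ is the propagated belief marginalized over the dropped $z_t$. By the tower property, $Q^{\tilde\pi}(b_0,a_0)=Q^{\pi^{\text{AOL},\tau_L}}(b_0,a_0)$. Applying this to the optimal AOL policy and using $Q^{\tilde\pi}\le Q^{\policyfstar}$ gives the left inequality. Equivalently, by induction one shows $\max_{a}Q^{\text{AOL}}(b(h^{\text{AOL}}),a)\le \mathbb{E}[\,V^{\policyfstar}(b(h^{Full}))\mid h^{\text{AOL}}]$, since a single action maximized outside an expectation is dominated by the expectation of pointwise maxima.

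\textbf{Upper bound.} Here I argue the reverse: the AFO agent has at least as much information as the POMDP agent. The induction hypothesis at a node is
\[
Q^{\policyfstar}(b(h_t),a)\le \mathbb{E}\big[Q^{\pi^{\text{AFO},\tau_U*}}(b(h^{\text{AFO},\tau_U}_t),a)\mid h_t\big],
\]
where the expectation is over the states revealed at fully-observable nodes, conditioned on the POMDP history. At closed-loop nodes ($\beta=0$) both sides share the same observation expectation, and the step follows directly from the hypothesis. At fully-observable nodes ($\beta=1$) the key step is Jensen's inequality for the convex $\max$:
\[
\max_{a}\mathbb{E}_{x_t}[f(x_t,a)]\le\mathbb{E}_{x_t}\max_a f(x_t,a).
\]
For this I need to express the POMDP continuation value as an expectation over $x_t$ of a quantity that the AFO agent can maximize pointwise. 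This uses the conditional independence $z_t\perp h_{t-1}\mid x_t$ and the Markov transitions, so that conditioning on $x_t$ makes earlier observations irrelevant to the future.

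\textbf{Main obstacle.} The delicate part is the bookkeeping at nodes following a fully-observable step. The AFO history contains the true $x_t$, together with possibly later closed-loop observations, and these must be consistently nested inside the POMDP history $\sigma$-algebra in the reverse direction. I would handle this by proving a general lemma: for any POMDP policy $\pi$, one can construct an AFO-admissible randomized policy that takes the same actions by sampling the missing observations $z$ from $\mathbb{P}_Z(\cdot\mid x)$. This policy achieves value $Q^{\pi}$. Optimality of $\pi^{\text{AFO},\tau_U*}$ over this richer class then gives the right inequality, avoiding a direct Jensen chain. The remaining care is to verify that deterministic AFO optimality dominates randomized policies, which holds because the AFO problem is itself a finite-horizon MDP over augmented histories.
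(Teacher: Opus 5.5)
Your proposal is correct and follows the same route as the paper. The lower bound comes from viewing every AOL policy as an observation-discarding policy of the original POMDP, with the max-outside-the-expectation (Jensen) inequality as the equivalent Bellman-level view, and the upper bound from showing that the AFO agent can reproduce the value of any POMDP policy. Your device of sampling $z_t\sim\mathbb{P}_Z(\cdot\mid x_t)$ at fully-observable steps, together with your check that randomized AFO policies cannot beat the deterministic optimum, fills in a step the paper glosses over: under \eqref{eq:history-update-fo} the AFO history at such steps contains $x_t$ but not $z_t$, so merely ``ignoring the extra information'' would not reproduce a POMDP policy that depends on $z_t$.
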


\begin{proof}
	We provide the proof in Appendix~\ref{proof:theorem_1}. 
\end{proof}
Since the topologies $\tau_U$ and $\tau_L$ are always specified together and share the same property, for simplicity, we denote $\tau=(\tau_U,\tau_L)$ and use $\tau$ as shorthand for this pair from now on when the meaning is unambiguous.
Precisely, $ub$ and $lb$ may be induced by different (but jointly specified) topologies: the upper bound uses $\tau_U$ through the AFO policy, while the lower bound uses $\tau_L$ through the AOL policy. Since we always consider them as a bundle, we use a slight abuse of notation and use $\tau$ as shorthand from now on since the distinction is clear from context.

In particular, based on Theorem \ref{theorem:open-loop2}, the bounds in \eqref{eq:ub-lb-simplification} are defined as
\begin{align}\label{eq:BoundsThm1}
	ub(\tau, b_0,a_0) \triangleq Q^{\pi^{\text{AFO},\tau*}}(b_0,a_0) , \ \ lb(\tau, b_0,a_0) \triangleq Q^{\pi^{\text{AOL},\tau*}}(b_0,a_0).
\end{align}
As long as the bounds are easier to compute than the original POMDP and have no overlap as shown in Fig.~\ref{fig:two-guarantees-right}, we can use them to determine the optimal action $a_0$ of the original POMDP problem and achieve the simplification of POMDP with formal performance guarantees.

When the bounds \eqref{eq:ub-lb-simplification} for different actions overlap  under topology $\tau$, we have to explore an alternative topology $\tau'$ to achieve non-overlapping bounds through an iterative process, which continues until we identify the optimal action (see Fig.~\ref{fig:two-guarantees}). The transition process typically switches some of the belief nodes from a simplified mode to the original POMDP mode. This iterative process monotonically tightens the bounds; moreover, the bounds converge to the optimal $Q$-function of the original POMDP after a finite number of iterations. We provide a detailed analysis of these properties in Appendix~\ref{appendix:bound-analysis}.


\subsection{Integration with Online POMDP Solvers}
\label{subsec:solvers}
To bridge the gap between theoretical analysis and practical implementation, we propose to use online POMDP solvers to estimate the bounds for our proposed simplification method and adapt topology online. In this work we consider  two such solvers: sparse-sampling-style solver, Sparse-PFT~\cite{Lim23jair}, and an anytime MCTS solver, which estimates the bounds and adapts the topology in an anytime manner. Specifically, in this section we consider a particle belief POMDP setting, and introduce two solvers, AT-SparsePFT and AT-POMCP. The results could be  extended to POMDPs with theoretical beliefs, i.e.~belief-MDP, following~\cite{Lim23jair}.
Utilizing these solvers enables practical applications of our proposed adaptive open and closed loop online planning framework with formal guarantees.

\subsubsection{AT-SparsePFT.} 
The SparsePFT algorithm represents beliefs using weighted particles $\{x^i, w^i\}_{i=1}^N$, where $N$ is the number of particles and $w^i$ denotes the weight of the $i$-th particle. The belief is formally denoted as: 
$
\bar{b}(x) \triangleq \frac{\sum_{i=1}^N w^i\delta(x^i-x)}{\sum_{i=1}^N w^i}.
$
Sparse-PFT uses sparse sampling \cite{Kearns02ml} to construct a search tree that, for each posterior belief node in the tree,  branches the entire action space and  samples $N^O$ observations. Each posterior belief node in the tree is updated using a particle filter. 

To facilitate the bounds estimation in our simplification framework with adaptive topology, we extend SparsePFT to AT-SparsePFT (Adaptive Topology SparsePFT), which works on the belief tree with augmented history as defined in~\eqref{eq:history-updater} and~\eqref{eq:history-update-fo}.
Given a topology $\tau$, AT-SparsePFT estimates 
the upper and lower bounds \eqref{eq:BoundsThm1_a}
over the optimal action-value function of the original (particle-belief) POMDP, 
 with formal finite-time guarantees. This is possible since these bounds correspond to the optimal action-value function of the corresponding topology, considering AOL and AFO settings. 
 We provide probabilistic guarantees on the estimation error of the estimated bounds by AT-SparsePFT for a given topology $\tau$ in the following theorem.

\begin{restatable}{theorem}{ATSparsePFTGuaranteeRestate}
	\label{theorem:ss-guarantee}
	Fix an arbitrary \mbox{$\lambda>0$}. Consider $N^O$ being the observation sampling number, $N$ being the state sampling number, and $C=\min\{N,N^O\}$.
	For every depth $d\in\{0,\dots,L-1\}$ and every action
	$a_d\in A$. The following event holds with probability at least  
	$
	1-2\,|A|\,(|A|C)^{\,L-d}\;
	\exp\!\Bigl(-\frac{C\lambda^{2}}{2V_{\max}^{2}}\Bigr)$:
	\begin{align}
		|\hat{ub}(\bar{b}_d,a_d,\tau)-{ub}(\bar{b}_d,a_d,\tau)| \leq t \quad , \quad |\hat{lb}(\bar{b}_d,a_d,\tau)-{lb}(\bar{b}_d,a_d,\tau)| \leq t,
	\end{align}
	where $t\triangleq\frac{(L-d)(L-d-1)}{2}\,\lambda$, and 
	$\hat{ub}$ and $\hat{lb}$ are the estimated bounds by SparsePFT for a given topology $\tau$ as
	 \begin{align}
		\hat{ub}(\bar{b}_0,a_0,\tau)\triangleq \hat{Q}^{\pi^{\text{AFO},\tau*}}(b_0,a_0) \quad,\quad \hat{lb}(\bar{b}_0,a_0,\tau)\triangleq \hat{Q}^{\pi^{\text{AOL},\tau*}}(b_0,a_0).
		\label{eq:EstBounds}
	\end{align}
%
%
\end{restatable}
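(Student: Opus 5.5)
The plan is to adapt the finite-sample analysis of Sparse-PFT in~\cite{Lim23jair}, which itself extends the sparse-sampling argument of~\cite{Kearns02ml}. The key observation is that, for a fixed topology $\tau$, both $ub(\cdot,\tau)$ and $lb(\cdot,\tau)$ are optimal action-value functions of well-defined particle-belief decision processes on the augmented belief trees $\mathbb{T}^{\text{AFO},\tau}$ and $\mathbb{T}^{\text{AOL},\tau}$. Each satisfies a Bellman recursion of the form $Q(\bar b_d,a_d)=r(\bar b_d,a_d)+\mathbb{E}[\max_{a'}Q(\bar b_{d+1},a')]$. At an open-loop node the expectation is only over the propagated belief (no observation branching). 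At a fully-observable node it is over sampled next states $x_{d+1}$. At a closed-loop node it is over sampled observations $z_{d+1}$. AT-SparsePFT replaces each such expectation by an empirical average over either $N$ state samples or $N^O$ observation samples, so every node averages at least $C=\min\{N,N^O\}$ i.i.d.\ children. I will therefore prove the statement for a generic estimator $\hat Q$ of this form and apply it twice, once for $ub$ and once for $lb$. The factor $2$ in the failure probability comes from this pair of applications (alternatively, from the two-sided Hoeffding bound).

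I will argue by backward induction on the depth $d$, from $L-1$ down to $0$. Let $e_d$ denote the worst-case error $\max_{a}|\hat Q(\bar b_d,a)-Q(\bar b_d,a)|$ over nodes at depth $d$. Write the error as
\begin{equation*}
\hat Q-Q=\Bigl(\tfrac1C\textstyle\sum_i \max_{a'} Q(\bar b^i_{d+1},a')-\mathbb{E}\max_{a'} Q\Bigr)+\tfrac1C\textstyle\sum_i\bigl(\max_{a'}\hat Q(\bar b^i_{d+1},a')-\max_{a'}Q(\bar b^i_{d+1},a')\bigr).
\end{equation*}
The first term is an average of $C$ i.i.d.\ variables bounded by $V_{\max}$, so Hoeffding gives deviation at most $\lambda$ with probability at least $1-2\exp(-C\lambda^2/2V_{\max}^2)$. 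The second term is bounded by the induction hypothesis, since $|\max\hat Q-\max Q|\le\max|\hat Q-Q|$. This yields the recursion $e_d\le \lambda+e_{d+1}$. Because the reward at the current particle belief is computed exactly, and the leaf value is exact or zero, the base case $e_{L-1}=0$ holds with no concentration step. Summing the recursion gives an error of at most $(L-d-1)\lambda$. This is already within the stated $t=\frac{(L-d)(L-d-1)}{2}\lambda$ for every $d\le L-1$, since $L-d\ge 1$ makes $\frac{L-d}{2}\ge\frac12$ and $(L-d)(L-d-1)/2\ge (L-d-1)$ whenever $L-d\ge2$. Hence the looser quadratic constant, which is what~\cite{Lim23jair} obtain once the particle-filter reweighting error is also accounted for, certainly suffices. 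If the particle-likelihood weighting requires an extra $\lambda$ per level through importance-weight concentration, I will add it at each level. The recursion then becomes $e_d\le (L-d-1)\lambda+e_{d+1}$, which sums exactly to $t$.

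The union bound is the remaining bookkeeping. The subtree rooted at depth $d$ has at most $(|A|C)^{L-d}$ sampled nodes, and each of the $|A|$ root actions must hold. Each concentration event fails with probability at most $2\exp(-C\lambda^2/2V_{\max}^2)$. The union bound therefore gives exactly the stated failure probability $2|A|(|A|C)^{L-d}\exp(-C\lambda^2/2V_{\max}^2)$.

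I expect the main obstacle to be justifying that the samples at AFO and AOL nodes are genuinely i.i.d.\ given the parent and that the bounded-variable assumption holds uniformly across the heterogeneous node types. At AFO nodes the state is revealed, so the child becomes a degenerate belief that later re-enters closed-loop mode. The clean way to handle this is to define a single unified generative process per topology, in which each node type draws $C'\ge C$ i.i.d.\ children from the appropriate kernel. I then check that the Sparse-PFT proof of~\cite{Lim23jair} only uses this structure: bounded values, i.i.d.\ children, and a max-Bellman backup. With that in place, the rest of the argument transfers verbatim to both $\hat{ub}$ and $\hat{lb}$.
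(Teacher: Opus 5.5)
This is essentially the paper's proof. It runs backward induction on depth, splits each level's error into a Hoeffding sampling term plus the error inherited from the children, and closes with a union bound over the sampled subtree, once for the AFO tree and once for the AOL tree. Your exact base case at depth $L-1$ and your linear accumulation $(L-d-1)\lambda\le t$ are actually cleaner than the paper's version. The paper's Hoeffding base case at depth $L$ is inconsistent with $t=0$ at $d=L-1$. Its claimed identity $\lambda+\frac{(L-d-1)(L-d-2)}{2}\lambda=\frac{(L-d)(L-d-1)}{2}\lambda$ is really only the inequality $\le$, valid for $L-d\ge 2$.

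A few bookkeeping repairs are needed:
\begin{itemize}
\item At the paper's open-loop AOL nodes the maximum sits outside the sample average. There, with $\hat m(a')$ the children's averaged $\hat{lb}$ and $m(a')$ its target, first use $|\max_{a'}\hat m(a')-\max_{a'}m(a')|\le\max_{a'}|\hat m(a')-m(a')|$ and then apply Hoeffding per action.
\item The factor $2$ is already consumed by two-sided Hoeffding, so covering $\hat{ub}$ and $\hat{lb}$ jointly has to be paid for by slack in your node count. That slack exists because no concentration event is needed at depth $L-1$.
\item Your count, like the paper's, tacitly assumes exactly $C$ children per node (i.e.\ $N=N^O$), not $C'\ge C$.
\end{itemize}
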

\begin{proof}
	We provide the proof in Appendix~\ref{proof:theorem_2}. 
\end{proof}
We utilize these estimated bounds to identify the optimal action at the root belief node, same manner as shown in Fig.~\ref{fig:two-guarantees}. Based on the probabilistic guarantee, we can get the confidence level for  identifying the optimal action via the estimated bounds at the root belief node. In practice, we start from a highly-simplified initial topology, e.g., a completely open-loop setting, and utilize the AT-SparsePFT solver to estimate the bounds in \eqref{eq:ub-lb-simplification} with probabilistic guarantees (Theorem \ref{theorem:ss-guarantee}). In case of an overlap at the root of the tree, as shown in Fig.~\ref{fig:two-guarantees-left}, we refine the topology and repeat the process until we can determine the optimal action via  non-overlapping bounds, as shown in Fig.~\ref{fig:two-guarantees-right}. The topology transition process is described in Appendix~\ref{app:topology-transition}. Notably, the AT-SparsePFT solver caches and reuses the belief nodes remaining unchanged when transitioning to the next topology, which significantly reduces the computational overhead. 

However, the sparse-sampling-style solver is limited to a short planning horizon due to the exponential complexity on the action space. 
Next, we introduce an anytime solver that scales better to longer horizons.

\subsubsection{AT-POMCP.}
We adapt the MCTS-style solver POMCP~\cite{Silver10nips} to estimate the bounds used in the proposed simplification while adapting the topology in an anytime manner, namely AT-POMCP (Adaptive Topology POMCP). In AT-POMCP, the belief tree is constructed online according to the adaptive open-loop history under a given topology $\tau$. 
During each simulation, when expanding new nodes, the solver uses the adaptive history updater defined in \eqref{eq:history-updater} or \eqref{eq:history-update-fo} to associate the new node with its corresponding history. Inspired by the progressive widening methodology~\cite{Couetoux11iclio} in continuous MCTS~\cite{Sunberg18icaps}, we propose a similar mechanism to adapt online the topology in a progressive manner during the POMCP simulation. This means the topology will be updated once after a given  number of iterations following the procedure described in Appendix~\ref{app:topology-transition}. This budget of iterations is adaptively updated using progressing widening. A detailed algorithm is provided in Appendix~\ref{app:mcts-algorithm}, with lines $15-22$ showing the progressive adaptation of topology. As far as we know, this is the first work to adapt the topology of a belief tree in an anytime progressive manner.
We provide the following convergence theorem for AT-POMCP:
\begin{restatable}[Convergence of AT-POMCP]{theorem}{ATPomCPConvergenceRestate}
	\label{theorem:at-pomcp-convergence}
Considering a discrete POMDP, for a suitable choice of the UCB parameter $c$ and progressive widening parameter of AT-POMCP, the value function estimated by AT-POMCP, $\hat{V}^{\mathrm{AT}*}(b_0)$, converges in probability to the optimal value function, $V^*(b_0)$: $\hat{V}^{\mathrm{AT}*}(b_0) \xrightarrow{p} V^*(b_0)$.
\end{restatable}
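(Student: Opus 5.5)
The plan is to reduce the statement to the known consistency of POMCP/UCT on discrete POMDPs (Kocsis and Szepesv\'ari's UCT analysis, as extended to POMCP by Silver and Veness) by showing two things. First, the progressive topology adaptation almost surely stops changing after finitely many topology updates. Second, the final topology is the fully closed-loop one, i.e.\ the original POMDP. Once the topology is frozen, AT-POMCP on the original topology is just POMCP. Its value estimate at $b_0$ then converges in probability to $V^*(b_0)$, provided $c$ is chosen appropriately (e.g.\ $c \ge 2LR_{\max}$, i.e.\ at least the range of returns over horizon $L$).

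\textbf{Step 1 (finiteness of topology changes).} In a discrete POMDP with finite horizon $L$, the set of belief-tree nodes, and hence the set of topologies $\tau$, is finite. The transition procedure of Appendix~\ref{app:topology-transition} only switches nodes from simplified (AOL/AFO) mode to original mode and never back. It is therefore monotone, so at most finitely many updates change $\tau$.

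\textbf{Step 2 (budget schedule).} The progressive-widening rule gives an iteration budget between updates that grows without bound, e.g.\ $k\,n^{\alpha}$. So between consecutive updates the tree receives arbitrarily many simulations. I would argue that any node whose bounds still overlap after its budget, or that lies on a near-optimal path, is eventually refined. Under UCB every action at every reachable node is visited infinitely often, so every reachable simplified node is eventually switched.

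\textbf{Step 3 (reaching the original topology).} The final topology may not be fully closed-loop: refinement stops once the root bounds separate. In that case Theorem~\ref{theorem:open-loop2} gives $lb \le Q^{\policyfstar} \le ub$. Non-overlap identifies the optimal action $a^*$. What remains is to show that the estimated value for $a^*$ converges to $Q^{\policyfstar}(b_0,a^*)$. I would choose the refinement rule, or the parameter regime, so that refinement continues until $ub=lb$ along the optimal branch; the monotone convergence result of Appendix~\ref{appendix:bound-analysis} says this happens after finitely many steps. Then, by Theorem~\ref{theorem:open-loop2}, the root value of the frozen topology equals $V^*(b_0)$.

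\textbf{Step 4 (transfer).} The statistics accumulated before the last topology change are finite in number and carry bounded returns ($|r|\le R_{\max}$). Their contribution to the running averages therefore vanishes as the visit counts go to infinity. Applying the UCT/POMCP consistency result to the tree of the frozen topology then gives $\hat V^{\mathrm{AT}*}(b_0)\xrightarrow{p} V^*(b_0)$.

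The main obstacle is Step 3 (together with Step 2): showing that the terminal topology yields exactly $V^*$ rather than only the correct action. This needs either a refinement rule that keeps tightening until the bounds coincide on the relevant nodes, or a precise reading of the ``suitable'' progressive-widening parameter that forces eventual full refinement. I would first try the latter, since it makes the argument a clean reduction to standard POMCP.
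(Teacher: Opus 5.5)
Your plan follows the paper's route. The paper uses the same three ingredients:
\begin{itemize}
\item POMCP convergence for a fixed topology;
\item the observation that samples collected under an earlier topology are finitely many, so their weight $N_{\beta=1}/N$ in the running averages vanishes;
\item the assertion that progressive adaptation eventually reaches the fully closed-loop topology $\tau_0$, where $V^{\tau_0*}(b_0)=V^*(b_0)$.
\end{itemize}

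The obstacle you flag in Step 3 does not arise for AT-POMCP. In Algorithm~\ref{alg:aol-pomcp}, a topology transition is triggered purely by the simulation count ($i > k\,j^{\alpha}$, with $j$ incremented each time), not by separation of the root bounds. Adaptations therefore occur infinitely often, and since the set of possible nodes is finite, every simplified node is eventually switched, provided each transition switches some remaining simplified node.

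This count-triggered schedule is also the right justification for Step 2. UCB visitation does not itself switch nodes in AT-POMCP, and bound overlap is the AT-SparsePFT stopping logic, not the AT-POMCP one. Steps 2 and 3 as written are mildly inconsistent on this point. Your alternative of refining only until $ub=lb$ on the optimal branch is unnecessary. It would also change the algorithm, and Appendix~\ref{appendix:bound-analysis} only gives convergence in the fully closed-loop extreme.

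Where you differ is packaging. You show that the topology freezes after finitely many monotone changes (your Step 1, which the paper leaves implicit). You then apply UCT consistency once, treating the pre-freeze statistics as a finite, bounded initialization whose effect is absorbed into the drift/tail conditions of Kocsis and Szepesv\'ari. The paper instead re-runs a local UCB argument at each change, using Auer et al.'s events (A)--(C) and its cases i--iv.

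Your version is cleaner. It also avoids having to argue, as the paper's case iv does, that the optimal branch cannot move away from a node after a switch: once the topology is frozen, the tree is simply POMCP on the original POMDP.
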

\begin{proof}
	The proof is provided in Appendix~\ref{appendix:convegence-at-pomcp}.
\end{proof}

\noindent To summarize, in this section we proposed a 
methodology for adaptive open-loop simplification in POMDP planning with formal performance guarantees. Next, we leverage it to enable skipping replanning in POMDPs with formal guarantees.




\section{Skipping Replanning with Performance Guarantees}
Conventional online planners perform replanning at every time step after executing the optimal immediate action $a_0^*$, which was computed during the initial planning session at $t=0$. 
In this section, we introduce a novel framework that enables skipping replanning at certain steps during execution while maintaining formal performance guarantees. 
We formalize this approach as \textit{open-loop execution} of POMDPs with performance guarantees relative to standard online planners within the original POMDP framework, where agents replan after each execution step. This extends the \textit{adaptive open-loop planning} presented in Section~\ref{sec:open-loop-method} to \textit{open-loop execution}.
To our knowledge, this is a novel methodology to address the fundamental question of "when to replan"~\cite{Honda24icra}.

The proposed framework for skipping replanning in POMDPs comprises two main phases: planning and execution. The planning phase employs the simplification method introduced in Section~\ref{sec:open-loop-method}; the execution phase facilitates safe skipping of replanning. This framework is  outlined in Algorithm~\ref{alg1} of Appendix~\ref{appendix:algorithm-skip-replanning}.


The core idea underlying the decision to skip replanning at a \textit{future} time instant $k$, during the \textit{current} planning session at time instant $t=0$, involves reasoning about and bounding the corresponding action-value function $Q^{\pi*}(b_k,a_k)$ for different future posterior beliefs $b_k$ and actions $a_{k}$, with regard to the optimal policy sequence $\pi^*_{k+1:k+L}$. Conceptually, such bounds provide formal performance guarantees that, under suitable conditions, enable skipping replanning if the optimal immediate action $a_k^*$ can be deterministically determined at time instant $t=0$. As will be seen, performance guarantees must be evaluated sequentially for future time instants, as these guarantees are meaningful only when they hold for all preceding actions, allowing the agent to skip replanning in preceding sessions. In other words, the proposed framework can support skipping a replanning session at a future time instant $k$ as long as it can be skipped also in all preceding time instances until then. 



Our proposed framework for skipping replanning operates as an \textit{execution-time} decision that fully utilizes the time allocated to action execution and observation collection. Conventional online planners must pause while the agent executes actions and gathers observations. In contrast, our method operates in parallel with execution, thus avoiding additional overhead when assessing the feasibility of skipping replanning.
In scenarios where action execution demands significant time, this approach yields substantial simplifications not only at the planning level but also at the execution level.
Compared to alternative methods that reason if replanning can be skipped at each present time, given the corresponding posterior belief that is conditioned on the history at that time, our approach fully exploits the execution time for action performance and observation acquisition, resulting in reduced replanning overhead, i.e., upon receiving an observation from the environment, the decision of whether to skip replanning is immediate.

The core of our proposed framework is to check the performance guarantees for actions $a_k$ at future time steps with $k\in[1,L]$, as indicated in line 18 of Algorithm~\ref{alg1}.
For the $Q$-value of $a_k$ and posterior belief $b_k$ at the $k$-th future planning session, Theorem~\ref{theorem:open-loop2} provides the bounds: $\text{lb}(\tau, b_k,a_k) \leq Q^{\pi^*}(b_k,a_k) \leq \text{ub}(\tau, b_k,a_k)$, where  $Q^{\pi*}$ is the optimal $Q$-function at time $k$ with an $L$-steps planning horizon. However, the posterior belief $b_k$ is unknown at the current planning session (at time instant $t=0$). Given a topology $\tau$ with open-loop actions in the first $k$ steps and recursively assuming the performance guarantees hold for preceding actions $a_{1:k-1}$, we  now reformulate these bounds for \textit{any} future belief $b_k$ to depend solely on information available at $t=0$: 
\begin{align}
	\text{lb}^k(\tau, b_0,a_{0:k})&\!\leq\! \text{lb}(\tau, b_k,a_k) \!\leq\! Q^{\pi^*}(b_k,a_k) \! \leq \!
	\text{ub}(\tau, b_k,a_k) 
	\!\leq \!\text{ub}^{k}(\tau, b_0,a_{0:k}).
\label{eq:bound-replan}
\end{align}
We can utilize these bounds, at time $t=0$, to check overlap for different candidate future actions $a_k$ and determine the optimal action $a_k^*$, using the same principle as shown in Fig.~\ref{fig:two-guarantees} and discussed in Section~\ref{subsec:performance-guarantees}. We denote this process as checking \textit{Skipping Replanning Guarantees} (SRG).

Specifically, consider a set of topologies $\mathcal{T}^k$, where each topology $\tau$ incorporates open-loop simplifications in the first $k$ steps: $\mathcal{T}^k=\{\tau:\beta^{\tau}(h)=1, \forall h\in \mathcal{H}_{0:k-1}\}$.
We present the main theorem deriving the bounds in~\eqref{eq:bound-replan}: 
\begin{restatable}{theorem}{TheoremFourRestate}
	\label{theorem:bound-step-2}
	Consider the current time to be $0$ and a topology $\tau \in \mathcal{T}^k$. Assuming a positive $Q$-function, we have:   
	\begin{align}
		\mathrm{lb}^k(\tau, b_0,a_{0:k}) &\triangleq C_k(\CZ,\CX^R)\left( \tilde{Q}^{{\pi}^{\text{AOL},\tau*}}_{L+k}(b_0, a_{0:k-1},a_k) - \sum_{i=0}^{k-1}\mathbb{E}[r(b_i,a_i)] \right),
		\\
		\mathrm{ub}^{k}(\tau, b_0,a_{0:k}) &\triangleq \frac{1}{C_k(\CZ,\CX^R)}\! \left(\tilde{Q}^{{\pi}^{\text{AFO},\tau*}}_{L+k}(b_0, a_{0:k-1},a_k)\! - \! \sum_{i=0}^{k-1}\mathbb{E}[r(b_i,a_i)]\right)\!.
	\end{align}
%
Here, $\tilde{Q}^{{\pi}^{AOL,\tau*}}_{L+k}(b_0, a_{0:k-1},a_k)$ is obtained by extending the planning horizon to $L+k$,  enforcing the action sequence $a_{0:k-1}$ at the first $k$ steps, and following the policy $\pi$ afterwards, i.e.
	$\tilde{Q}^{\pi^{\tau*}}_{L+k}(b_0, a_{0:k-1},a_k) \triangleq \sum_{i=0}^k \emathbb{z_{1:i}} [r(b_i, a_i)] + \sum_{i=k+1}^{   k+L}\emathbb{z_{1:i}}\big[ r(b_i, \pi^{\tau*}_{i}(b_i))\big],$
	and 
$	C_k(\CZ,\CX^R) \triangleq  \prod_{j=1}^{k} \frac{\min_{z_j \in \CZ,x_j \in \CX^R} \{P(z_j|x_j)>0\}}{\max_{z_j \in \CZ,x_j \in \CX^R} \{P(z_j|x_j)>0\}},$ with $\CX^R$ being the reachable state space after executing $a_{0:k-1}$ from $b_0$, i.e.~$\CX^R = \CX_{\mathrm{reach}}(b_0, a_{0:k-1}) = \mathrm{supp} (\mathbb{P}(x_k|b_0,a_{0:k-1}))$,
and $ \sum_{i=0}^{k-1}\mathbb{E}[r(b_i,a_i)]$ corresponds to the expected sum of rewards when executing the action sequence $a_{0:k-1}$ from $b_0$.
\end{restatable}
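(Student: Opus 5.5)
The plan is to reduce both inequalities in \eqref{eq:bound-replan} to one comparison: the true posterior $b_k=b(h_k)$ versus the propagated open-loop belief $b^-_k(x_k)\triangleq\mathbb{P}(x_k\mid b_0,a_{0:k-1})$, which is the only belief the current session at $t=0$ has access to. The middle inequalities $\mathrm{lb}(\tau,b_k,a_k)\le Q^{\pi^*}(b_k,a_k)\le \mathrm{ub}(\tau,b_k,a_k)$ are Theorem~\ref{theorem:open-loop2} applied at the root $b_k$ with horizon $L$. So it remains to prove the two outer inequalities for every $b_k$ reachable from $b_0$ under $a_{0:k-1}$.

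\textbf{Step 1 (identify the $\tilde Q$ terms).} Since $\tau\in\mathcal{T}^k$ makes the first $k$ steps open-loop, the AOL history after $a_{0:k-1}$ is just $\{a_{0:k-1}\}$ and the associated belief is $b^-_k$. By the definition of $\tilde Q$, the first $k$ reward terms do not depend on the policy, so
\begin{align}
\tilde{Q}^{\pi^{\text{AOL},\tau*}}_{L+k}(b_0,a_{0:k-1},a_k)-\sum_{i=0}^{k-1}\mathbb{E}[r(b_i,a_i)] = Q^{\pi^{\text{AOL},\tau*}}(b^-_k,a_k).
\end{align}
In the AFO case the first $k$ steps reveal the state, and the same subtraction gives $\mathbb{E}_{x_k\sim b^-_k}\big[Q^{\pi^{\text{AFO},\tau*}}(x_k,a_k)\big]$. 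I would check both identities by unrolling the Bellman recursions defining the AOL and AFO $Q$-functions in Section~\ref{subsec:definition}.

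\textbf{Step 2 (likelihood-ratio sandwich).} By Bayes' rule over the trajectory, $b_k(x_k)\propto\sum_{x_{1:k-1}}\mathbb{P}(x_{1:k}\mid b_0,a_{0:k-1})\prod_{j=1}^k P(z_j\mid x_j)$. Every trajectory that contributes lies in the reachable support, and every factor is a positive observation probability there. Each factor therefore lies in $[m_j,M_j]$, where $m_j$ and $M_j$ are the min and max positive likelihoods entering $C_k$. The normalizer is bounded in the same way, which gives
\begin{align}
C_k\, b^-_k(x)\le b_k(x)\le C_k^{-1}\, b^-_k(x)\qquad \forall x.
\end{align}
Here $C_k$ is read per step and over the reachable sets $\CX^R$. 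This is where the support and reachability conditions are used: any $z_j$ with zero likelihood would break the ratio, which is why the minimum is taken over positive values only.

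\textbf{Step 3 (transfer through linearity and convexity).} For the lower bound, let $\pi$ be the optimal AOL continuation policy computed at $b^-_k$. Because $\mathrm{lb}$ is a maximum over policies, $\mathrm{lb}(\tau,b_k,a_k)\ge V^{\pi}(b_k,a_k)$. A fixed policy's value is linear in the belief, $V^\pi(b,a_k)=\sum_x b(x)\alpha^\pi(x)$. Given nonnegative $\alpha^\pi$, Step 2 yields $V^\pi(b_k,a_k)\ge C_k\sum_x b^-_k(x)\alpha^\pi(x)=C_k\,Q^{\pi^{\text{AOL},\tau*}}(b^-_k,a_k)$, and Step 1 turns this into $\mathrm{lb}^k$. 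For the upper bound I would use that the AFO value is bounded by its fully observed counterpart: $\mathrm{ub}(\tau,b_k,a_k)\le\sum_x b_k(x)\,Q^{\text{AFO}}(x,a_k)$, by convexity, or directly because full observability at the first open-loop node dominates. Then Step 2 gives at most $C_k^{-1}\sum_x b^-_k(x)Q^{\text{AFO}}(x,a_k)$, which equals $\mathrm{ub}^k$ by Step 1.

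\textbf{Main obstacle.} I expect the delicate point to be Step 3's reliance on sign. The hypothesis ``positive $Q$-function'' has to be strengthened to nonnegativity of the state-wise $\alpha$-values, and not only of the belief-level $Q$. For history-dependent AOL policies, $\alpha^\pi(x)$ must also be defined so that the policy is fixed independently of how $b_k$ was reached. I would first try to handle this by treating the continuation policy as a conditional plan over the future AOL tree, rooted at the open-loop node, so that linearity holds exactly. If that fails, I would fall back on shifting rewards by $R_{\max}$ to enforce positivity.
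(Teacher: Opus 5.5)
Your route is the paper's: identify the $\tilde Q$ differences with values at the open-loop propagated belief, sandwich the posterior between $C_k b_k^-$ and $C_k^{-1} b_k^-$ by Bayes' rule, and transfer the sandwich to $Q$. Where you depart from the paper, you are more careful. Your trajectory sum in Step~2 keeps the marginalization over $x_{1:k-1}$ that the paper's product formula silently drops. Step~3 (conditional-plan linearity for $\mathrm{lb}$, convexity in the root belief for $\mathrm{ub}$) supplies the argument the paper replaces with ``directly applying''. Your sign worry is also justified. Write $b_k=C_kb_k^-+(1-C_k)\mu$; then the plan $\pi$ optimal at $b_k^-$ gives $V^\pi(b_k,a_k)=C_kQ^{\pi^{\text{AOL},\tau*}}(b_k^-,a_k)+(1-C_k)V^\pi(\mu,a_k)$. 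Positivity of the optimal $Q$ says nothing about $V^\pi(\mu,a_k)$, so nonnegative rewards is the clean hypothesis. The $R_{\max}$ shift is not a fallback for the stated bounds: $C_k$ multiplies the shifted value, so the shift does not cancel.

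The genuine gap is in Step~2, when the observation model vanishes somewhere on the reachable set. ``Every factor is a positive observation probability'' holds only for trajectories with nonzero weight. A reachable trajectory with $P(z_j\mid x_j)=0$ drops out of both numerator and normalizer, so neither $\text{numerator}\ge\prod_j m_j\,b_k^-(x_k)$ nor $\text{normalizer}\ge\prod_j m_j$ survives. Both halves of the sandwich then fail: a state excluded by $z_j$ has $b_k(x)=0<C_kb_k^-(x)$, and a lone surviving state can have $b_k(x)=1\gg C_k^{-1}b_k^-(x)$. Taking the minimum over positive values does not guard against this; it is exactly the regime where the argument breaks. No better argument can fix it, because the inequalities themselves fail. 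Take two states with identity transitions, $b_0=(\tfrac12,\tfrac12)$, noiseless observations $P(z=i\mid s_i)=1$, and rewards $r(s_1,\cdot)=1$, $r(s_2,\cdot)=100$. Every positive likelihood equals $1$, so $C_1=1$ and $\mathrm{lb}^1=\mathrm{ub}^1=50.5(L+1)$. Yet $Q^{\pi^*}(\delta_{s_1},a_1)=L+1<\mathrm{lb}^1$ and $Q^{\pi^*}(\delta_{s_2},a_1)=100(L+1)>\mathrm{ub}^1$. You need the extra hypothesis $P(z\mid x)>0$ for every $z\in\CZ$ (or every $z$ in the allowed set) and every $x_j$ reachable at step $j\le k$. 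The ``$>0$'' qualifier in $C_k$ is then vacuous, and your Steps~1--3 go through. The paper's proof makes the same unjustified step, so this is a missing assumption in the statement rather than a flaw peculiar to your plan.
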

\begin{proof}
	We provide the proof in Appendix~\ref{proof:theorem_3}. 
\end{proof}
 We now formulate conditions for skipping replanning with optimality guarantees.

\begin{proposition}
	\label{proposition:skip-replanning}
	Consider the current time to be $t=0$, the belief is $b_0$, and a topology $\tau \in \mathcal{T}^k$. For each $i\in[1,k]$, we can obtain $\mathrm{lb}^i(\tau, b_0,a_{0:i})$ and $\mathrm{ub}^i(\tau, b_0,a_{0:i})$ as in Theorem~\ref{theorem:bound-step-2}. 	
	If for each $i\in[1,k]$, we can identify the optimal action $a_i^*$, i.e., $a_i^*=\argmax_{a\in \CA} \mathrm{ub}^i(\tau, b_0,(a^*_{0:i-1},a))$ and 
	$\mathrm{lb}^i(\tau, b_0,(a^*_{0:i-1},a^*_i)) \geq \\ \mathrm{ub}^i(\tau, b_0,(a^*_{0:i-1},a'_i)),  \forall a'_i \neq a^*_i$, 
	as shown in Fig.~\ref{fig:two-guarantees-right}, then we can identify the optimal action sequence $a^*_{0:k}$ at time $t=0$ for all the possible future beliefs. Thus, we can directly execute $a^*_{0:k}$ and replanning can be safely skipped for all future planning sessions between time instances $[1,k]$.
\end{proposition}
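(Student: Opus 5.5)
The plan is an induction on $i\in[1,k]$, using Theorem~\ref{theorem:bound-step-2} as a uniform sandwich (over all reachable posteriors) of the future optimal $Q$-values. The base case is the root. There the optimal action $a_0^*$ is identified by non-overlapping bounds from Theorem~\ref{theorem:open-loop2}, exactly as described in Section~\ref{subsec:performance-guarantees}. This is the implicit prerequisite of the proposition: $a_0^*$ is the action the agent executes first.

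\textbf{Inductive hypothesis.} Assume that for every $j<i$ and every posterior belief $b_j$ reachable from $b_0$ under $a^*_{0:j-1}$, the replanned optimal action at time $j$ equals $a_j^*$. Then the execution actually followed up to time $i$ is the fixed sequence $a^*_{0:i-1}$, whatever observations occur. Hence every possible posterior $b_i$ arises from $b_0$ by executing $a^*_{0:i-1}$ and conditioning on some observation sequence $z_{1:i}$ with positive probability. This is precisely the set of beliefs covered by \eqref{eq:bound-replan}. Since $\tau\in\mathcal{T}^k\subseteq\mathcal{T}^i$, Theorem~\ref{theorem:bound-step-2} with $k$ replaced by $i$ applies. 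It gives, for every such $b_i$ and every $a\in\CA$,
\[
\mathrm{lb}^i(\tau,b_0,(a^*_{0:i-1},a))\le Q^{\pi^*}(b_i,a)\le \mathrm{ub}^i(\tau,b_0,(a^*_{0:i-1},a)).
\]
The key point is that the outer bounds depend only on $b_0$ and $a^*_{0:i}$, not on $z_{1:i}$.

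\textbf{Inductive step.} Fix any reachable $b_i$ and any $a'_i\neq a^*_i$. Chaining the sandwich with the hypothesis of the proposition gives
\[
Q^{\pi^*}(b_i,a^*_i)\ge \mathrm{lb}^i(\tau,b_0,(a^*_{0:i-1},a^*_i))\ge \mathrm{ub}^i(\tau,b_0,(a^*_{0:i-1},a'_i))\ge Q^{\pi^*}(b_i,a'_i).
\]
So $a^*_i\in\argmax_a Q^{\pi^*}(b_i,a)$, which is the action replanning at time $i$ would return. Ties are harmless: we only need $a_i^*$ to be an optimal action, or we can adopt the tie-breaking convention of Fig.~\ref{fig:two-guarantees-right}. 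The condition $a_i^*=\argmax_a \mathrm{ub}^i(\cdot)$ serves only to pin down which candidate must be checked; the non-overlap inequality carries the argument. This closes the induction. Consequently the executed sequence is $a^*_{0:k}$ for all observation realizations, and replanning at times $1,\dots,k$ would reproduce these same actions, so it can be skipped.

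\textbf{Main obstacle.} The delicate step is making the hypotheses of Theorem~\ref{theorem:bound-step-2} valid at each $i$. First, the reachable set $\CX^R$ and the constant $C_i$ must be computed along the sequence actually executed, $a^*_{0:i-1}$. This is why the guarantees must be verified sequentially: if some earlier step failed, the executed prefix could differ and the bound at step $i$ would be about the wrong beliefs. Second, the positivity assumption on $Q$ must hold. Third, the $Q^{\pi^*}(b_i,\cdot)$ being compared must be the $L$-horizon value used by a replanning agent at time $i$, which is what Theorem~\ref{theorem:bound-step-2} bounds. I would state these explicitly and argue that the quantifier \emph{for every reachable $b_i$} is supplied by Theorem~\ref{theorem:bound-step-2}, since its bounds hold uniformly over observation histories.
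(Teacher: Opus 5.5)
Your proposal is correct and follows the paper's own reasoning. The paper gives no separate proof; it relies on the surrounding text, in which Theorem~\ref{theorem:bound-step-2} at each index $i$ (valid since $\tau\in\mathcal{T}^k\subseteq\mathcal{T}^i$) bounds $Q^{\pi^*}(b_i,\cdot)$ uniformly over all posteriors reachable under $a^*_{0:i-1}$, the non-overlap condition then certifies $a_i^*$ at every such $b_i$, and the checks must hold sequentially so that the executed prefix really is $a^*_{0:i-1}$. Your induction makes this explicit, and your remarks on ties and on the non-overlap inequality carrying the argument (rather than the $\argmax$ condition) tighten what the paper leaves informal.
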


However, the bounds presented in Theorem~\ref{theorem:bound-step-2} may become uninformative in certain scenarios, as  $C_k$ accounts for all possible observations.
To address this limitation, we now propose a refined variant that enables performance guarantee maintenance for skipping replanning across a broader range of scenarios.

We introduce a subset of observations $\bar{\mathcal{Z}}_{1:k}$, termed the \emph{allowed observation set} for skipping replanning, where $\forall i\in[1,k], \ \ \bar{\mathcal{Z}}_{i} \subseteq \mathcal{Z}$. We can tighten the bounds from Theorem \ref{theorem:bound-step-2} by constraining the factor $C_k$ with $\bar{\mathcal{Z}}_{1:k}$, which can potentially restore the performance guarantees in some scenarios where the original formulation provides too loose bounds.
For instance, in practice, the subset $\bar{\mathcal{Z}}_{1:k}$ can be constructed to include some of the most likely observations.

Given the allowed observation set $\bar{\mathcal{Z}}_{1:k}$, we can reformulate the bounds in \eqref{eq:bound-replan} to account for this subset as $\bar{lb}^{k}(\tau, b_0,a_{0:k},\bar{\mathcal{Z}}_{1:k})$ and $\bar{ub}^{k}(\tau, b_0,a_{0:k},\bar{\mathcal{Z}}_{1:k})$, which leads to the following theorem:
\begin{restatable}{theorem}{TheoremFiveRestate}
	\label{theorem:bound-step-2-z-region}
        Under the same conditions and definitions  as in Theorem \ref{theorem:bound-step-2}, but given a subset of observations $\bar{\mathcal{Z}}_{1:k} \subseteq \mathcal{Z}$, we have: 
			\begin{align*}
		 &\bar{lb}^{k}(\tau, b_0,a_{0:k},\bar{\mathcal{Z}}_{1:k}) 
		= {C}_k(\bar{\mathcal{Z}}_{1:k},\CX^R)\left( \tilde{Q}^{{\pi}^{\text{AOL},\tau*}}_{L+k}(b_0, a_{0:k-1},a_k) - \sum_{i=0}^{k-1}\mathbb{E}[r(b_i,a_i)] \right), 
		\\
		 &\bar{ub}^{k}(\tau, b_0,a_{0:k},\bar{\mathcal{Z}}_{1:k})\!
		= \frac{1}{{C}_k(\bar{\mathcal{Z}}_{1:k},\CX^R)}\! \left(\tilde{Q}^{{\pi}^{\text{AFO},\tau*}}_{L+k}(b_0, a_{0:k-1},a_k)\! -\! \sum_{i=0}^{k-1}\mathbb{E}[r(b_i,a_i)]\right),
	\end{align*}
	 where $C_k(\bar{\CZ}_{1:k},\CX^R)$ is defined in Theorem \ref{theorem:bound-step-2}.
\end{restatable}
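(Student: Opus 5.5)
Theorem~\ref{theorem:bound-step-2-z-region} is proved the same way as Theorem~\ref{theorem:bound-step-2}. The only change is that every minimum and maximum over observations is restricted to the allowed set $\bar{\mathcal{Z}}_{1:k}$. The claim is therefore for every future belief $b_k$ reachable under $a_{0:k-1}$ with $z_{1:k}\in\bar{\mathcal{Z}}_{1:k}$:
\[
\bar{lb}^k\le lb(\tau,b_k,a_k)\le Q^{\pi^*}(b_k,a_k)\le ub(\tau,b_k,a_k)\le\bar{ub}^k .
\]
The middle two inequalities are Theorem~\ref{theorem:open-loop2} applied at the root $b_k$. What remains is to compare the per-belief bound $lb(\tau,b_k,a_k)$, and likewise $ub(\tau,b_k,a_k)$, with the root quantity $\tilde{Q}_{L+k}$, from which the known open-loop rewards have been subtracted.

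Step one is to decompose the extended-horizon value. Since $\tau\in\mathcal{T}^k$, the first $k$ steps are open-loop, so the actions $a_{0:k-1}$ are fixed regardless of observations. The reward of step $i<k$ therefore depends only on $b_0$ and $a_{0:i}$, and subtracting $\sum_{i=0}^{k-1}\mathbb{E}[r(b_i,a_i)]$ leaves the tail $G\triangleq\tilde{Q}_{L+k}-\sum_{i<k}\mathbb{E}[r]$. After step $k$ the topology again conditions on observations, so $G$ is the expected optimal value of the subtree from step $k$ onward. Writing this expectation over states, and treating the fully-observable nodes in the AFO case in the same way, I expect $G=\mathbb{E}_{x_k\sim P(\cdot\mid b_0,a_{0:k-1})}[F(x_k)]$. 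Here $F$ is the value obtained from state $x_k$ with action $a_k$ and the subsequent optimal $\tau$-policy over horizon $L$.

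Step two compares this with a posterior root. The per-belief bound is $lb(\tau,b_k,a_k)=\mathbb{E}_{x_k\sim b_k}[F(x_k)]$, where by Bayes
\[
b_k(x_k)\propto\prod_{j=1}^{k}P(z_j\mid x_j)\,P(x_{1:k}\mid b_0,a_{0:k-1}),
\]
marginalised over $x_{1:k-1}$. Restrict attention to $z_j\in\bar{\mathcal{Z}}_j$ and to reachable states with positive likelihood. Each likelihood factor then lies between the restricted minimum and maximum of $P(z_j\mid x_j)$. The ratio $b_k(x_k)/P(x_k\mid b_0,a_{0:k-1})$ is consequently sandwiched between $C_k(\bar{\mathcal{Z}}_{1:k},\CX^R)$ and $1/C_k(\bar{\mathcal{Z}}_{1:k},\CX^R)$. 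Since the $Q$-function is positive, $F\ge0$, and so
\[
C_k\,G\le \mathbb{E}_{b_k}[F]\le G/C_k .
\]
Applying the lower side with the AOL policy gives $\bar{lb}^k\le lb(\tau,b_k,a_k)$. Applying the upper side with the AFO policy gives $ub(\tau,b_k,a_k)\le\bar{ub}^k$.

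The main obstacle is the measure-theoretic care in step two. The factor $C_k$ is a product of per-step likelihood ratios over the reachable set, while $\CX^R$ is defined only for $x_k$, so intermediate states $x_j$ must be handled by taking the min and max over the reachable set at each step. In addition, the exclusion of zero-probability pairs, encoded in the definition of $C_k$ by $\{P>0\}$, must be consistent with supports: the posterior's support should be contained in the prior's, and nonnegativity of $F$ lets us drop the mass outside. The restriction to $\bar{\mathcal{Z}}$ itself adds no difficulty, because the guarantee is only claimed for observation histories inside it. Ratios of the kind used here only get closer to $1$ when the set is shrunk, which is why these bounds are tighter than those of Theorem~\ref{theorem:bound-step-2}.
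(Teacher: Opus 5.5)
Your route is the paper's. You bound the posterior $b_k$ above and below by multiples of the open-loop prediction $\phi^p=\phi^p(b_0,a_{0:k-1})$, using Bayes with the likelihood ratio restricted to $\bar{\mathcal{Z}}$. You then transfer this to $Q$ by positivity and identify $Q(\phi^p,a_k)$ with $\tilde{Q}_{L+k}-\sum_{i<k}\mathbb{E}[r(b_i,a_i)]$. Two steps fail as written.

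First, no single $F$ satisfies both $G=\mathbb{E}_{\phi^p}[F]$ and $lb(\tau,b_k,a_k)=\mathbb{E}_{b_k}[F]$. The optimal $\tau$-continuation depends on the root belief, and $Q^{\pi^{\text{AOL},\tau*}}(b,a_k)=\max_{\pi}\mathbb{E}_{x\sim b}[F^{\pi}(x)]$ is convex, not linear, in $b$. The fix is to choose the policy according to the direction of the bound:
\begin{itemize}
\item For the lower bound, take $F$ from the continuation that is optimal at $\phi^p$. Then $lb(\tau,b_k,a_k)\ge\mathbb{E}_{b_k}[F]\ge C_k\,\mathbb{E}_{\phi^p}[F]=C_kG$.
\item For the upper bound, take $F'$ from the AFO continuation that is optimal at $b_k$. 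Then $ub(\tau,b_k,a_k)=\mathbb{E}_{b_k}[F']\le C_k^{-1}\mathbb{E}_{\phi^p}[F']\le C_k^{-1}G'$, where $G'$ is the AFO analogue of $G$.
\end{itemize}
In each case optimality supplies exactly the inequality you need. The pointwise transfer also needs $F^{\pi}(x)\ge 0$ at every state. That follows from nonnegative rewards, not from positivity of the belief-level $Q$.

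Second, and more seriously, the sandwich $C_k\phi^p\le b_k\le C_k^{-1}\phi^p$ is false once some reachable state has zero likelihood for an allowed observation. The qualifier $\{P>0\}$ in $C_k$ explicitly permits this. States eliminated by the observation get $b_k=0<C_k\phi^p$. Moreover, the normaliser $P(z_{1:k}\mid b_0,a_{0:k-1})$ is then no longer bounded below by the product of the restricted minima, so the surviving states can be inflated arbitrarily. Nonnegativity of $F$ lets you discard mass only in the upper-bound direction, and it does nothing for the normaliser.

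Here is a counterexample.
\begin{itemize}
\item Take two absorbing states $A,B$ with $b_0$ uniform.
\item Let $P(z\mid A)=P(z'\mid B)=1$ and $\bar{\mathcal{Z}}_1=\{z\}$.
\item Set $r(A,\cdot)=1$ and $r(B,\cdot)=3$.
\end{itemize}
Then $C_1=1$ and $b_1=\delta_A$, so $\bar{lb}^1=2(L+1)>L+1=Q^{\pi^*}(b_1,a_1)$. Swapping the rewards breaks $\bar{ub}^1$ in the same way. Taking $\bar{\mathcal{Z}}_1=\mathcal{Z}$ changes nothing, so Theorem~\ref{theorem:bound-step-2} fails as well.

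This is therefore not a matter of measure-theoretic care. You need an extra hypothesis: $P(z_j\mid x_j)>0$ for every $z_j\in\bar{\mathcal{Z}}_j$ and every $x_j$ reachable at step $j$. The min and max should be taken over these per-step reachable sets, as you already suggested for the intermediate states. Under that hypothesis, and with the first fix, your argument is a correct and more explicit version of the paper's. The paper's own proof has both gaps. It asserts the belief-ratio sandwich in the proof of Theorem~\ref{theorem:bound-step-2} without justification, and it transfers that sandwich to $Q$ by ``directly applying'' it, which hides the policy-selection step.
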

\begin{proof}
	We provide the proof in Appendix~\ref{proof:theorem_4}.
\end{proof}

\begin{figure}[t]
	\centering
	\includegraphics[width=0.66\linewidth]{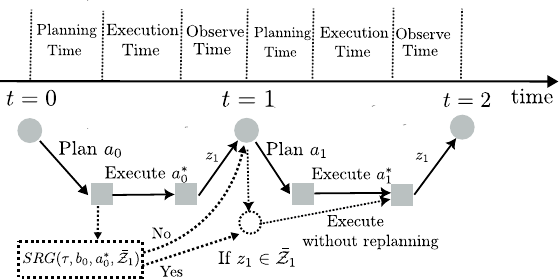}
	\caption{The process of skipping replanning based on the allowed observation set $\bar{\mathcal{Z}}_1$. The agent executes the action $a^*_0$ and checks if the observation $z_{1}$ belongs to the set $\bar{\mathcal{Z}}_{1}$. If it does, the agent skips replanning; otherwise, it triggers replanning.  The process to skip replanning with performance guarantee is shown in dotted lines, which can be conducted in parallel with the execution of action and observation, thus reducing the replanning overhead. 
	}
	\label{fig:skip-replanning-pipeline}
\end{figure}

Based on the reformulated bounds, we can now determine whether the performance guarantee holds given a subspace $\bar{\mathcal{Z}}_{1:k}$ at time $t=0$, i.e., if we can determine the optimal action $a_k$ based on the bounds from Theorem \ref{theorem:bound-step-2-z-region} with respect to $\bar{\mathcal{Z}}_{1:k}$. We denote such a check by the Boolean function $SRG(\tau, b_0, a^*_{0:k-1},a_k, \bar{\mathcal{Z}}_{1:k})$. 

Although the bounds in the above theorems may be computationally expensive or scenario-dependent, they can be computed in parallel with the action execution and observation collection, thus avoiding additional overhead, which forms the \textit{execution-time} planning. 
Fig.~\ref{fig:skip-replanning-pipeline} illustrates this process of \textit{execution-time} planning for the specific case when $k=1$. The agent checks the SRG with $\bar{\CZ}_1$ after the optimal action $a_0^*$ is identified. If SRG holds, i.e., $SRG(\tau, b_0, a^*_{0}, a_1, \bar{\mathcal{Z}_1})=\text{true}$, upon executing action $a_{0}$ and receiving observation $z_1$, the agent only needs to check if $z_1 \in \bar{\mathcal{Z}_1}$ to determine if replanning can be safely skipped. If $z_1 \in \bar{\mathcal{Z}}_1$, the agent can proceed executing $a_1$ without replanning; otherwise, replanning is triggered.


\section{Experiments}

We empirically evaluate our simplification methodology using practical solvers, AT-SparsePFT and AT-POMCP. Our experimental setup adopts the SparsePFT algorithm~\cite{Lim23jair} and POMCP~\cite{Silver10nips} as baselines. Results show that our approach preserves optimal decisions while substantially improving computational efficiency, yielding notable reductions in runtime without compromising quality.

We conduct the evaluation on the Beacon Navigation problem, where an agent is trying to reach a goal while avoiding obstacles under localization uncertainty. The robot receives noisy localization signals from beacons, reflecting real-world sensing conditions. This domain highlights the core challenge of planning under partial observability, requiring belief maintenance and informed action selection. It provides a practical yet tractable benchmark for evaluating belief space planning simplifications.

Detailed experimental settings are provided in Appendix~\ref{appendix:experimental-details}.

\subsection{Open-loop Simplification for Planning}
\paragraph{Bounds Estimation.}

We first present results from single-step simulation starting from a deterministic belief. Fig.~\ref{fig:qvalue-comparison} illustrates the estimated bounds obtained using our method compared with the $Q$-value estimated by standard sparse sampling. These results demonstrate that our proposed method effectively bounds the $Q$-value, thereby supporting our performance guarantees.

\begin{figure}[t]
    \centering
    \includegraphics[width=0.55\textwidth]{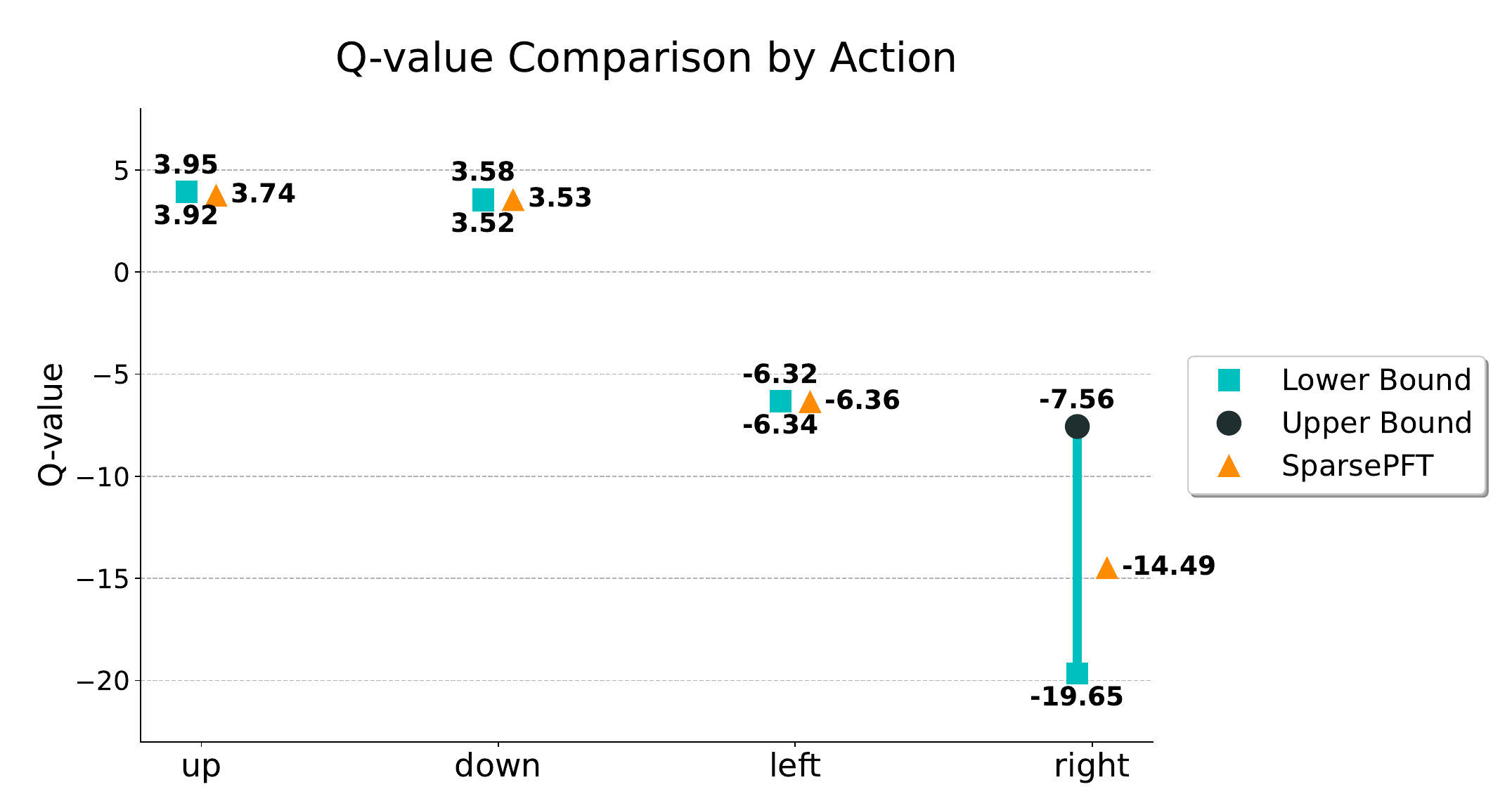}
    \caption{Distribution of estimated bounds. The upper and lower bounds are computed using our proposed method with open-loop simplification, AT-SparsePFT. The yellow triangle denotes the $Q$-value estimated by the standard SparsePFT. }
    \label{fig:qvalue-comparison}
\end{figure}

\paragraph{AT-SparsePFT Simulation Results.}
 We conducted simulations over 10 steps for the beacon navigation problem, with each simulation repeated 100 times to compute means and standard deviations. Table~\ref{tab:simulation-results} presents the results, comparing runtime and cumulative rewards after 10 steps. Our method, AT-SparsePFT, achieves a significant speedup of approximately $16\times$ while maintaining comparable solution quality. The mean cumulative rewards obtained by our method closely match those of the standard SparsePFT approach, while providing substantial computational efficiency improvements. These results validate our claims regarding POMDP simplification with performance guarantees.

\begin{table}[t]
    \centering
    \resizebox{0.6\textwidth}{!}{
    \begin{tabular}{lcccc}
        \hline
        Method & Returns  & Runtime (s) & Speedup Ratio \\
        \hline
        SparsePFT  & $12.64 $ & $345.9 $ & $1.0\times$ \\
        AT-SparsePFT (ours)           & $12.50 $ & $\mathbf{20.66 }$ & $\mathbf{16.7\times}$ \\
        \hline
    \end{tabular}
    }
    \vspace{+0.1cm}
    \caption{Cumulative rewards and runtime of the baseline SparsePFT method and our proposed method, AT-SparsePFT. The speedup ratio is computed as the ratio of baseline to proposed runtime. Results are averaged over 100 independent 10-step simulations.}
    \label{tab:simulation-results}
\end{table}

\paragraph{AT-POMCP Simulation Results.} We further conduct a comprehensive empirical evaluation of the proposed method estimated by the MCTS-style estimator (as introduced in Section~\ref{subsec:solvers}) on the beacon navigation task over a planning horizon of 10 steps. Each experimental configuration is repeated 20 times to compute mean cumulative rewards and standard deviations. The computational time budget is progressively increased from $50\,\text{ms}$ to $1\,\text{s}$ for both the baseline POMCP method and our proposed approach.  
As to the progressive topology adaptation, the topology is set to be adapted every $100$ simulations.

Table~\ref{tab:mcts-compare-with-time-budget} presents a comparison of cumulative rewards achieved by the baseline POMCP method and our proposed anytime solver AT-POMCP under the same computational budgets. Our approach consistently achieves superior cumulative rewards across all tested time budgets, with improvements ranging from $6.84\%$ to $10.24\%$. Given that the beacon navigation domain exhibits a sparse reward structure, these improvements in average cumulative rewards are noteworthy and statistically meaningful.

\begin{table}[t]
\centering
\resizebox{\textwidth}{!}{%
\begin{tabular}{lcccccc}
\hline
Time Budget & 50ms & 80ms & 100ms & 300ms & 500ms & 1.0s \\
\hline
POMCP & $6.988 \pm 0.352$ & $7.082 \pm 0.270$ & $7.030 \pm 0.165$ & $7.14 \pm 0.229$ & $7.15 \pm 0.284$ & $7.347 \pm 0.277$ \\
AT-POMCP(ours) & $\textbf{7.533} \pm 0.481$ & $\textbf{7.566} \pm 0.533$ & $\textbf{7.698} \pm 0.494$ & $\textbf{7.825} \pm 0.655$ & $\textbf{7.883} \pm 0.353$ & $\textbf{8.018} \pm 1.526$ \\
Improvement & 7.80\% & 6.84\% & 9.52\% & 9.57\% & 10.24\% & 9.12\% \\
\hline
\end{tabular}
}
\caption{Cumulative reward comparison across computational budgets. Results are reported as mean $\pm$ standard deviation over 20 trials.}
\label{tab:mcts-compare-with-time-budget}
\end{table}

To assess the computational advantage of our approach, we compare the runtime required for the baseline POMCP algorithm and the proposed method when both achieve similar mean cumulative reward. 
Table~\ref{tab:mcts-same-reward} reports the experimental results: the baseline attains an average return of $7.456$ after $2.5$s of computation, whereas our method reaches a slightly higher return of $7.533$ in only $0.05$s. This corresponds to an approximately $50\times$ speedup. The dramatic reduction in runtime while preserving reward quality validates our claim of the open-loop simplification of POMDP planning which yields substantial efficiency gains while preserving quality.


\begin{table}[t]
    \centering
    \resizebox{0.7\textwidth}{!}{
    \begin{tabular}{lccccc}
        \hline
        Method & Returns  & Runtime (s) & Speedup Ratio \\
        \hline
        POMCP & $7.456 \pm 0.276$  & $2.5$ & $1.0\times$ \\
        AT-POMCP(ours)   & $7.533 \pm 0.481$ & $\mathbf{0.05}$ & $\mathbf{50\times}$ \\
        \hline
    \end{tabular}
    }
    \caption{Runtime comparison between the baseline POMCP and the proposed AT-POMCP when operating at the same performance level (average cumulative reward). The speedup ratio is computed as the baseline runtime divided by the proposed method runtime; returns are averaged over 20 independent trials.}
    \label{tab:mcts-same-reward}
\end{table}

Table~\ref{tab:mcts-same-reward} demonstrates that our algorithm not only yields higher cumulative rewards but, more importantly, converges at a markedly faster rate than the baseline POMCP. The reward attained by the proposed method after merely $50\,\text{ms}$ of computation would require roughly $2500\,\text{ms}$ for the baseline to match—an approximately $50\times$ acceleration in convergence.  
This pronounced speedup provides strong empirical evidence that the convergence rate of the simplified POMDP planner is dramatically superior, while simultaneously delivering equal solution quality.  Consequently, the results support our theoretical analysis that our proposed open-loop simplification leads to improved planning efficiency in the online anytime solvers.


\subsection{Experiments on Skipping Replanning}
This section evaluates the effectiveness of our proposed framework in skipping replanning while preserving performance guarantees. We demonstrate this through a specific scenario, tunnel navigation, and show that skipping replanning is both feasible and safe. In this setup, similar to the beacon navigation problem, the agent navigates through a tunnel to reach the goal with noisy observations from beacons. We consider a positive reward function here. This scenario exemplifies a typical setting for open-loop planning, as previously explored in~\cite{Hauser11icml}.

\begin{figure}[t]
    \centering
    \begin{subfigure}[b]{0.4\textwidth}
        \centering
        \includegraphics[width=\textwidth]{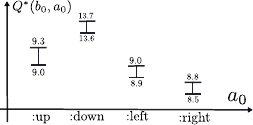}
        \caption{
            }
        \label{fig:replan-bounds-a0}
    \end{subfigure}
    \qquad
    \begin{subfigure}[b]{0.4\textwidth}
        \centering
        \includegraphics[width=\textwidth]{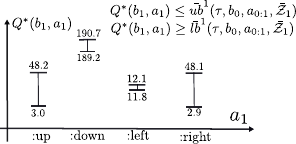}
        \caption{
            }
        \label{fig:replan-bounds-a1}
    \end{subfigure}
    \caption{Skipping replanning with guarantees: 
    (a) Planning with the proposed adaptive open-loop simplification at $t=0$. It shows $ub(\tau, b_0,a_0)$ and $lb(\tau, b_0,a_0)$ for different $a_0$. $a_0^*$ is identified. (b) Bounding $Q^{*}(b_1,a_1)$ at $t=0$ for any posterior belief $b_1$ that corresponds to future observations in $\bar{\mathcal{Z}}_1$. The figure shows $\bar{ub}^{1}$ and $\bar{lb}^{1}$ defined in Theorem~\ref{theorem:bound-step-2-z-region}. Here $a_1^*$ can be provably determined at $t=0$ for all realizations of the future observation $z_1$ in $\bar{\mathcal{Z}}_1$. }
    \label{fig:replan-bounds-two}
\end{figure}

Fig.~\ref{fig:replan-bounds-two} illustrates the process of verifying the proposed framework of skipping replanning with formal guarantee.  Using our proposed framework to safely skip replanning, at $t=0$, the optimal action $a_0^*$ can be identified and the future action $a_1^*$ can be provably determined for all possible future observations $z_1 \in \bar{\mathcal{Z}}_1$, where $\bar{\mathcal{Z}}_1$ includes the four most likely observations.

Table~\ref{tab:replan-results} summarizes the results of our skipping replanning method. We compare AT-SparsePFT with and without our skipping replanning method. Our method successfully skips replanning in $24\%$ of the steps while maintaining performance guarantees. The cumulative reward obtained by our method is almost identical to that of the method without skipping replanning due to the stochastic properties of the simulations, thereby validating the performance guarantees of our approach. The skipping ratio serves as a key metric, highlighting the efficiency gains of our framework at the overall execution level, in addition to the planning-level simplifications introduced in Section~\ref{sec:open-loop-method}.


\begin{table}[t]
    \centering
    \resizebox{0.8\textwidth}{!}{
    \begin{tabular}{lcccc}
        \hline
        Method & Returns  & Skipping Replanning Ratio \\
        \hline
        AT-SparsePFT w/o Skipping Replanning& $ 235.7$  & $0\%$ \\
        AT-SparsePFT w/ Skipping Replanning & $231.3$ & $\mathbf{24\%}$ \\     
        \hline
    \end{tabular}
    }
    \caption{Comparison of cumulative rewards and the ratio of steps where replanning can be safely skipped with performance guarantees. Results are averaged over 100 independent 5-step simulations.}
    \label{tab:replan-results}
\end{table}

\section{Conclusions}

In this paper, we introduced a novel framework for adaptive open-loop simplification of POMDPs, enabling a significant reduction in computational complexity while maintaining formal performance guarantees. By leveraging a topology-based belief tree, our approach adaptively interleaves open-loop and closed-loop planning, providing efficiently computable bounds that guarantee identification of the optimal action in the original POMDP. We propose practical solvers for our adaptive simplification, AT-SparsePFT and AT-POMCP. Furthermore, we proposed a principled method for safely skipping replanning, supported by theoretical guarantees on multi-step open-loop action sequences. Empirical results demonstrate substantial speedup with provable guarantees, highlighting the practicality of our approaches. 

\newpage


\bibliographystyle{splncs04}
\bibliography{../references/refs}
\appendix
\newenvironment{proofsketch}{%
  \par\noindent\textit{Proof sketch.}\ }{%
  \hfill\(\square\)\par}

\onecolumn

\clearpage
\pagenumbering{roman}
\setcounter{page}{1}

\renewcommand{\thesection}{\Roman{section}}
\renewcommand{\thesubsection}{\thesection.\Alph{subsection}}
\setcounter{section}{0} 

\setcounter{equation}{0}

\begin{center}
    \LARGE \textbf{Open-loop POMDP Simplification and Safe Skipping of Replanning\\with Formal Performance Guarantees\\} 
    \textbf{Supplementary Material}
\end{center}
\section{Proof of Theorem~\ref{theorem:open-loop2}}
\label{proof:theorem_1}




\begin{proof}
We prove the upper and lower bounds separately by comparing the policy spaces and information available to each policy.

\paragraph{Upper Bound: $Q^{\policyfstar}(b_0,a_0) \leq Q^{\policy^{\text{AFO},\tau*}}(b_0,a_0)$}
The Adaptive Fully-Observable (AFO) policy, $\pi^{\text{AFO},\tau}$, operates in an information space that is at least as rich as that of the standard POMDP policy, $\pi^*$. At any belief node where the topology indicator $\beta^{\text{AFO},\tau}=1$, the AFO policy has access to the true state $x_t$, which is strictly more informative than the belief $b_t$ available to the standard policy. A policy with access to more information can always achieve a value at least as high as a policy with less information, as it can simply choose to ignore the extra information and replicate the less-informed policy's strategy.

Consequently, the set of all AFO policies, $\Pi^{\text{AFO},\tau}$, can achieve any value that the set of standard POMDP policies, $\Pi^{\text{Full}}$, can. Maximizing over these policy spaces, the optimal value for the AFO policy must be greater than or equal to the optimal value for the standard POMDP:
\[
Q^{\policyfstar}(b_0,a_0) = \max_{\pi \in \Pi^{\text{Full}}} Q^{\pi}(b_0,a_0) \leq \max_{\pi' \in \Pi^{\text{AFO},\tau}} Q^{\pi'}(b_0,a_0) = Q^{\policy^{\text{AFO},\tau*}}(b_0,a_0).
\]

\paragraph{Lower Bound: $Q^{\pi^{\text{AOL},\tau*}}(b_0,a_0) \leq Q^{\policyfstar}(b_0,a_0)$}
The Adaptive Open-Loop (AOL) policy, $\pi^{\text{AOL},\tau}$, is more constrained than the standard POMDP policy. At any belief node where $\beta^{\text{AOL},\tau}=1$, the AOL policy must select an action without conditioning on the most recent observation $z_t$. This means the policy space for AOL, $\Pi^{\text{AOL},\tau}$, is a subset of the full POMDP policy space, $\Pi^{\text{Full}}$. Since the optimization for the AOL policy is performed over a smaller set of policies, its optimal value cannot exceed that of the standard POMDP:
\[
Q^{\pi^{\text{AOL},\tau*}}(b_0,a_0) = \max_{\pi \in \Pi^{\text{AOL},\tau}} Q^{\pi}(b_0,a_0) \leq \max_{\pi^* \in \Pi^{\text{Full}}} Q^{\pi^*}(b_0,a_0) = Q^{\policyfstar}(b_0,a_0).
\]
This relationship can be seen from the Bellman equations. The standard POMDP performs a maximization after observing $z_t$, while the AOL policy must maximize before the expectation over $z_t$ when in an open-loop step. By Jensen's inequality, we have:
	\begin{align}
        \nonumber
		&\emathbb{x_{t}|h_t^-}\emathbb{z_{t}|x_{t}}\max_{a_{t}} \big[r(x_{t},a_{t})+\emathbb{x_{t+1}|h_t,a_t}\emathbb{z_{t+1}|x_{t+1}}V(b_{t+1})\big] \\ & \geq\max_{a_{t}}\emathbb{x_{t}|h_t^-}\emathbb{z_{t}|x_{t}} \big[r(x_{t},a_{t})+\emathbb{x_{t+1}|h_t,a_t}\emathbb{z_{t+1}|x_{t+1}}V(b_{t+1})\big].
	\end{align} 
The left side corresponds to the standard closed-loop update, and the right side corresponds to the open-loop update, confirming the sub-optimality.

This establishes the claimed upper and lower bounds.
\end{proof}

\section{Proof of Theorem~\ref{theorem:ss-guarantee}}
\label{proof:theorem_2}
\begin{proof}
    We now provide the proof for the lower bound on the error of the sparse-sampling estimator for the adaptive open-loop policy, $\Delta\hat{lb}$. The proof proceeds by induction on the depth $d$ of the belief tree, from $d=L$ down to $d=0$. The value function $V_{\max}$ is defined as $V_{\max} = L \cdot R_{\max}$.

    \textbf{Base Case ($d=L$):}
    At the maximum depth $L$, the Q-value is simply the immediate expected reward, as there are no future steps.
    $Q^{\pi^{\text{AOL},\tau*}}(b_L, a_L) = \mathbb{E}_{x_L|b_L}[r(x_L, a_L)]$.
    The sparse-sampling estimator $\hat{lb}(b_L, a_L)$ for this value is an average over samples from the belief $b_L$. By Hoeffding's inequality, for any $\lambda' > 0$, the probability of the estimation error exceeding $\lambda'$ is bounded:
    \[
    \Pr\left[ |\hat{lb}(b_L, a_L) - lb(b_L, a_L)| > \lambda' \right] \le 2 \exp\left(-\frac{2N(\lambda')^2}{R_{\max}^2}\right),
    \]
    where $N$ is the number of state samples from $b_L$. 

    \textbf{Inductive Step:}
    Assume that for any depth $d' > d$, any belief $b_{d'}$, and any action $a_{d'}$, the following holds with high probability:
    $
    |\hat{lb}(b_{d'}, a_{d'}, \tau) - lb(b_{d'}, a_{d'}, \tau)| \le \frac{(L-d')(L-d'-1)}{2}\lambda.
    $

    Now, consider depth $d$. The lower bound $Q$-value, $lb(b_d, a_d, \tau) = Q^{\pi^{\text{AOL},\tau*}}(b_d, a_d)$, is given by the Bellman equation for the adaptive open-loop policy. We analyze the two cases based on the topology indicator $\beta^{\text{AOL},\tau}$.

    \textit{Case 1: Closed-loop step ($\beta^{\text{AOL},\tau}(h_d) = 0$)}
    \begin{align}
        lb(b_d, a_d, \tau) = \mathbb{E}_{x_d|b_d}[r(x_d, a_d)] +  \mathbb{E}_{z_{d+1}|b_d, a_d} \left[ \max_{a_{d+1}} lb(b_{d+1}, a_{d+1}, \tau) \right].
    \end{align}
    The estimator $\hat{lb}(b_d, a_d, \tau)$ is computed by sampling $N^O$ observations $z_{d+1}^{(j)}$ and recursively calling the estimator on the resulting beliefs $b_{d+1}^{(j)}$:
    \begin{align}
        \hat{lb}(b_d, a_d, \tau) = \mathbb{E}_{x_d|b_d}[r(x_d, a_d)] + \frac{1}{N^O} \sum_{j=1}^{N^O} \max_{a_{d+1}} \hat{lb}(b_{d+1}^{(j)}, a_{d+1}, \tau).
    \end{align}
    Let $Y(z_{d+1}) = \max_{a_{d+1}} lb(b_{d+1}, a_{d+1}, \tau)$ and $\hat{Y}(z_{d+1}) = \max_{a_{d+1}} \hat{lb}(b_{d+1}, a_{d+1}, \tau)$. The error can be bounded by the sum of a sampling error and the propagated error from the next level:
    \begin{align}
        \Delta\hat{lb}(b_d, a_d, \tau) \le&  \left| \mathbb{E}[\hat{Y}] - \frac{1}{N^O}\sum \hat{Y}_j \right| +  \left| \mathbb{E}[Y] - \mathbb{E}[\hat{Y}] \right| \\
        \le&  \left| \mathbb{E}[\hat{Y}] - \frac{1}{N^O}\sum \hat{Y}_j \right| +  \mathbb{E}[|Y - \hat{Y}|].
    \end{align}
    The first term is a sampling error, bounded by $\lambda$ with high probability using Hoeffding's inequality. The second term is bounded by the inductive hypothesis: $ \mathbb{E}[|Y - \hat{Y}|] \le  \frac{(L-d-1)(L-d-2)}{2}\lambda$.
    Thus, the total error is bounded by $\lambda + \frac{(L-d-1)(L-d-2)}{2}\lambda = \frac{(L-d)(L-d-1)}{2}\lambda$.

    \textit{Case 2: Open-loop step ($\beta^{\text{AOL},\tau}(h_d) = 1$)}
    \begin{align}
        lb(b_d, a_d, \tau) = \mathbb{E}_{x_d|b_d}[r(x_d, a_d)] +  \max_{a_{d+1}} \mathbb{E}_{x_{d+1}|b_d, a_d} \left[ lb(b_{d+1}, a_{d+1}, \tau) \right].
    \end{align}
    The estimator $\hat{lb}(b_d, a_d, \tau)$ samples $N$ next states $x_{d+1}^{(j)}$:
    \begin{align}
        \hat{lb}(b_d, a_d, \tau) = \mathbb{E}_{x_d|b_d}[r(x_d, a_d)] + \max_{a_{d+1}}\frac{1}{N} \sum_{j=1}^{N}  \hat{lb}(b_{d+1}^{(j)}, a_{d+1}, \tau).
    \end{align}
    The logic is identical to Case 1, but the expectation is over next states $x_{d+1}$ instead of observations $z_{d+1}$. The error is bounded similarly.

    Since $C = \min\{N, N^O\}$, the error bound is largely dominated by the worst case.
    To complete the proof, we use a union bound over all possible histories and actions. 
    Applying the union bound over all actions at depth $d$ and all nodes in the subsequent subtree, the probability at depth $d$ for a given action $a_d$ is bounded by $2(|A|C)^{L-d} \exp(-\frac{C\lambda^2}{2V_{\max}^2})$. Taking another union bound for all $|A|$ actions at depth $d$ gives the final probability in the theorem statement.

    For the upper bound, we have a similar result as in the lower bound case.
\end{proof}

\section{Bound Analysis of $ub$ and $lb$}
\label{appendix:bound-analysis}
\paragraph{Complexity Analysis}
The computational complexity of solving $Q^{{\pi}^{\text{AOL},\tau*}}$ and $Q^{\pi^{\text{AFO},\tau*}}$ is primarily determined by the topology $\tau$, where a larger number of belief nodes with simplification yields more substantial complexity reduction.  For each belief node with simplification (i.e., $\beta^{\tau}=1$), the complexity of a single-step Bellman update with state-dependent reward is reduced from $\mathcal{O}(|X||Z||A|)$ to $\mathcal{O}(|X||A|)$, consistent with the cancellation of expectation over observations.

\paragraph{Convergence.} When the topology is switched, certain belief nodes are reverted to a closed-loop setting by appropriately introducing observations (as discussed in Section~\ref{subsec:definition}). In the extreme case, all belief nodes will be switched to a closed-loop setting, which corresponds to a standard POMDP belief tree. Consequently, both the upper and lower bounds converge to the optimal $Q$-function of the full POMDP.

\paragraph{Monotonicity.}
We establish monotonicity with respect to transitions toward topologies containing a greater number of closed-loop belief nodes. Let $\tau$ denote the original topology and $\tau'$ the modified topology. For the upper bound, we have $Q^{{\pi}^{AFO,\tau,*}}(b_0,a_0) \geq Q^{{\pi}^{AFO,\tau',*}}(b_0,a_0)$. For the lower bound, $Q^{{\pi}^{AOL,\tau*}}(b_0,a_0) \leq Q^{{\pi}^{AOL,\tau'*}}(b_0,a_0)$. These inequalities demonstrate that the bounds monotonically tighten as the topology transitions to include more closed-loop nodes.

The proof of the monotonicity is based on the same method to prove Theorem~\ref{theorem:open-loop2}. The key idea is that the AFO policy with more nodes using full observability can achieve at least as high a value as the AFO policy with fewer nodes with full observability, and similarly for the AOL policy.

\paragraph{Parallel Calculation of the Upper and Lower Bounds.}
The policy $\policy^{AFO,\tau*}$ can be computed in parallel with the policy $\policy^{AOL,\tau*}$, as the upper and lower bounds are independent of each other. This parallel computation can significantly reduce the overall computation time required to obtain both bounds.

\section{Cross-topology Transition}
\label{app:topology-transition}
When bounds defined in \eqref{eq:ub-lb-simplification} under topology $\tau$ overlap, we have to explore an alternative topology $\tau'$ to achieve non-overlapping bounds through an iterative process, which continues until we identify the optimal action.
We propose an incremental refinement method that ensures monotonic bound tightening and asymptotic convergence to the full POMDP solution.
Since $\tau$ actually means a pair of $\tau_U$ and $\tau_L$, we will transition both topologies simultaneously during the topology adaptation process from $\tau$ to $\tau'$, i.e., $\tau' = (\tau_U',\tau_L')$.

The transition process first selects some belief nodes in belief tree $\mathbb{T}^{\tau}$ to transition from open-loop step to closed-loop step, creating an updated topology $\tau'$ with a new belief tree $\mathbb{T}^{\tau}$. 
This transition typically introduces observations at the selected belief nodes and expands the belief tree structure with new nodes.

Then, the indicator function $\beta^{\tau'}$ is updated to: (1) enforce closed-loop planning for the chosen nodes, (2) preserve the original planning mode for the unchanged nodes, and (3) inherit the mode from the nodes in the previous topology for newly generated belief nodes. This approach ensures consistent topology evolution.

\paragraph{Belief nodes caching.} To reduce the computational overhead, we cache belief computations during planning. During topology transitions, unchanged nodes are retrieved directly from the cache, while only selected nodes require recomputation.

\section{Algorithm of AT-POMCP}
\label{app:mcts-algorithm}

The proposed anytime solver, AT-POMCP, is  given by Algorithm~\ref{alg:aol-pomcp}.

\begin{algorithm}[thb]
\caption{Topology-based MCTS-style Anytime Solver}
\label{alg:aol-pomcp}
\begin{algorithmic}[1]
\STATE \textbf{Input:} state $s$, history $h$, planning horizon $L$, simulation index $i$; topology $\tau$, topology adaptation index $j$, topology progressive adaptation parameter $\alpha$ and $k$.
\STATE \textbf{Output:} return estimate $R$ and update belief tree.

\STATE \textbf{Simulate}$(s,h,\mathrm{depth},i)$
\IF{$\mathrm{depth} > L$}
    \STATE \textbf{return} $0$
\ENDIF
\IF{$h \notin T$}
    \FOR{$a \in \mathcal{A}$}
        \STATE $T(ha) \gets (N_{\mathrm{init}}(ha), V_{\mathrm{init}}(ha), \emptyset)$
    \ENDFOR
    \STATE \textbf{return} $\mathrm{Rollout}(s,h,\mathrm{depth})$
\ENDIF
\STATE $a \gets \arg\max_b \left[ V(hb) + c \sqrt{\frac{\log N(h)}{N(hb)}} \right]$
\STATE $(s',o,r) \sim \mathcal{G}(s,a)$
\STATE // --- Progressive topology adaptation ---
\IF{$i\leq k \cdot j^{\alpha}$}  
    \STATE $\tau \gets \tau$
\ELSE
    \STATE $j \gets j+1$
    \STATE // --- Randomly set some belief nodes' indicator function $\beta^{\tau}$ to $0$ ---
    \STATE $\tau \gets \mathrm{RandomTopoTransition}(\tau)$
\ENDIF
\STATE Update History $h'$ by~\eqref{eq:history-updater} or~\eqref{eq:history-update-fo} based on topology $\tau$
\STATE $R \gets r + \mathrm{Simulate}(s',h',\mathrm{depth}+1, i)$
\STATE $B(h) \gets B(h) \cup \{s\}$
\STATE $N(h) \gets N(h) + 1$
\STATE $N(ha) \gets N(ha) + 1$
\STATE $V(ha) \gets V(ha) + \dfrac{R - V(ha)}{N(ha)}$
\STATE \textbf{return} $R$
\end{algorithmic}
\end{algorithm}

\section{Convergence of AT-POMCP: Proof of Theorem~\ref{theorem:at-pomcp-convergence}}
\label{appendix:convegence-at-pomcp}
We provide the convergence guarantee of AT-POMCP in the following theorem.

\begin{proofsketch}
	The proof starts from a fixed topology version of AT-POMCP. If the topology is fixed, the convergence of AT-POMCP directly follows from the convergence of POMCP~\cite{Silver10nips}: $\hat{V}^{\tau*}(b_0) \xrightarrow{p} V^{\tau*}(b_0)$.
	Then, the proof will focus on the progressive adaptation of topology, which will affect the UCB action selection process and its convergence. 

	Consider the belief node with history $\tilde{h}_t$ \footnote{For simplicity, we sometimes refer to belief node with history $\tilde{h}_t$ as belief node $\tilde{h}_t$.}, where it expands action branches, and leads to child nodes, among which we consider a specific history $\tilde{h}_{t+1}^-$.  We assume after some iterations, the progressive adaptation of topology will change the topology from $\tau$ to $\tau'$ and modify the indicator function at this belief node from $\beta^\tau(\tilde{h}_{t+1}^-)=1$ to $\beta^{\tau'}(\tilde{h}_{t+1}^-)=0$. This means the belief node $\tilde{h}_{t+1}^-$ will expand observation branches after the topology adaptation.  
	This adaptation will affect the UCB action selection process at the belief node $\tilde{h}_t$ and the value estimation at $\tilde{h}_{t+1}^-$. We will analyze the convergence of value estimation at these two belief nodes after the topology adaptation. If the convergence at these two nodes hold, the convergence of other nodes will follow.

	\paragraph{1. We examine the average return at the node $\tilde{h}_{t+1}^-$.}
    Denote $N(\tilde{h}_{t+1}^-)$ to be the number of simulations at belief node with history $\tilde{h}_{t+1}^-$. 
    Then, the average return at $\tilde{h}_{t+1}^-$ under the new topology will become: 
    \begin{align}
    \bar{G}^{\tau', N(\tilde{h}_{t+1}^-)}(\tilde{h}_{t+1}^-) = \frac{G^{\tau, N_{\beta=1}(\tilde{h}_{t+1}^-)}(\tilde{h}_{t+1}^-) + G^{\tau', N_{\beta=0}(\tilde{h}_{t+1}^-)}(\tilde{h}_{t+1}^-)}{N(\tilde{h}_{t+1}^-)},
    \end{align} 
    where $N_{\beta=1}(\tilde{h}_{t+1}^-)$ and $N_{\beta=0}(\tilde{h}_{t+1}^-)$ are the number of simulations at $\tilde{h}_{t+1}^-$ before and after the topology adaptation, and $N(\tilde{h}_{t+1}^-)=N_{\beta=1}(\tilde{h}_{t+1}^-)+N_{\beta=0}(\tilde{h}_{t+1}^-)$. The return will include some simulations under the old topology. Since the simulation number $N_{\beta=1}(\tilde{h}_{t+1}^-)$ will not increase anymore and is finite, as we increase the number of simulations, the ratio of $N_{\beta=1}(\tilde{h}_{t+1}^-)/N(\tilde{h}_{t+1}^-)$ will converge to $0$. Thus, the average return will converge: 
    \begin{equation}
        \label{eq:return-converge-new-topo}
    \bar{G}^{\tau', N(\tilde{h}_{t+1}^-)}(\tilde{h}_{t+1}^-) \to \bar{G}^{\tau',N_{\beta=0}(\tilde{h}_{t+1}^-)}(\tilde{h}_{t+1}^-).
    \end{equation}
	This indicates that the estimation at $\tilde{h}_{t+1}^-$ will converge to the new topology part. 
	\paragraph{2. We examine the UCB action selection at belief node $\tilde{h}_t$.} At belief node $\tilde{h}_t$, we are interested in the estimated optimal Q-function under the new topology $\tau'$. This part is based on the convergence of UCB derived by~\cite{Kocsis06ecml} and~\cite{Auer02ml}. 
    
    The analysis aims to show that the expected number of sub-optimal action selections,  $\mathbb{E}(N_{\mathrm{Sub-opt}})$, is bounded. Based on~\cite{Auer02ml}'s proof of 
    Theorem 1, we can split the choosing of the sub-optimal action into three events: \textbf{(A)} The optimal branch is not explored enough and has a much lower average return than the theoretical one, \textbf{(B)} The average return of the suboptimal branch is much higher than the theoretical one, \textbf{(C)} The theoretical value of the  optimal branch is not large enough to distinguish the optimal action. (See Equation (7)-(9) in~\cite{Auer02ml}). For event (C), our case follows the same theoretical level analysis as~\cite{Auer02ml}.

    For our case, we will show that the topology adaptation will not affect event (A) and (B) in the estimation level, by considering the following three cases:
	\begin{itemize}
		\item[i.] If the given history $\tilde{h}_{t+1}^-$ is not in the optimal action branch under the both old and new topologies, based on the proof in the previous part, the average return after the topology adaptation will converge to the new topology part as shown in Equation~\eqref{eq:return-converge-new-topo}. Thus it will eventually converge to the theoretical value under the new topology, not breaking case (B).
        
		\item[ii.] If the given history $\tilde{h}_{t+1}^-$ is in the optimal action branch under the old and new topologies, based on the proof of converge in the first part, the average return after the topology adaptation will converge to the new topology part, as shown in Equation~\eqref{eq:return-converge-new-topo}. This will not break case (A).
        
		\item[iii.] If the given history $\tilde{h}_{t+1}^-$ is in the optimal action branch under the new topology but not under the old topology, this will affect the UCB action selection at the time of topology adaptation. But if we keep increasing the number of simulations, due to the exploration bonus term, the UCB action selection will eventually explore the branch with $\tilde{h}_{t+1}^-$ under the new topology enough times. As long as the branch with $\tilde{h}_{t+1}^-$ under the new topology is explored enough times, based on the proof of converge in the first part, the average return after the topology adaptation will converge to the new topology part (see Equation~\eqref{eq:return-converge-new-topo}), which will become the optimal branch. So, this case will not break case (A).
        
        \item[iv.] An additional case could be as follows: If the given history $\tilde{h}_{t+1}^-$ is in the optimal action branch under the old topology but not under the new topology. However, this will not happen because the topology transition will monotonically tighten the bounds toward the value function of the original POMDP, as shown in Appendix~\ref{appendix:bound-analysis}.
	\end{itemize}

    So far, we have shown that all the cases will not affect the boundedness of $\mathbb{E}(N_{\mathrm{Sub-opt}})$, then the rest of the proof can directly follow~\cite{Kocsis06ecml}.
	This means that the topology adaptation will not affect the convergence of AT-POMCP if adapting from topology $\tau$ to topology $\tau'$, as: $\hat{V}^{\tau'*}(b_0) \xrightarrow{p} V^{\tau'*}(b_0)$.   	
	
	Using the progressive topology adaptation, eventually the topology of the belief tree will converge to the original POMDP topology $\tau_0$, without any simplification. Repeating the process in the above proof, we can show that $\hat{V}^{\tau_0*}(b_0) \xrightarrow{p} V^{\tau_0*}(b_0)=V^*(b_0)$. Thus, the value function estimated by AT-POMCP, $\hat{V}^{\mathrm{AT}*}(b_0)$ converges in probability to the optimal value function $V^*(b_0)$. This finishes the proof.

\end{proofsketch}

\section{Algorithm of Safe Skipping Replanning}
\label{appendix:algorithm-skip-replanning}
The algorithm of safe skipping replanning is given by Algorithm~\ref{alg1}.

\begin{algorithm}[ht]
\caption{Safe Skipping Replanning in POMDPs}
\label{alg1}
\resizebox{0.98\linewidth}{!}{%
\begin{minipage}{\linewidth}
\begin{algorithmic}[1]
\STATE \textbf{Input:} Initial belief $b_0$, initial topology $\tau_0$
\WHILE{not in a terminal state}
    \STATE // --- Planning Phase to find optimal action $a_0^*$ ---
    \STATE $\tau \gets \tau_0$
    \REPEAT
        \STATE For each action $a \in \mathcal{A}$, compute bounds $ub(\tau, b, a)$ and $lb(\tau, b, a)$.
        \IF{an optimal action $a_0^*$ is identified}
            \STATE Get corresponding policy ${\pi}^{AOL,\tau*}$
            \STATE \textbf{break}
        \ELSE
            \STATE Refine topology $\tau \gets \tau'$ to tighten bounds.
        \ENDIF
    \UNTIL{optimal action $a_0^*$ is found}
    
    \STATE // --- Execution Phase ---
    \FOR{$k = 1, 2, \dots, L$}
        \IF{the $k$-th step in ${\pi}^{AOL,\tau*}$ is open-loop}
            \STATE Let $a_k$ be the $k$-th action in ${\pi}^{AOL,\tau*}$
            \IF{$SRG(\tau, b_0, a^*_{0:k-1}, a_k, \bar{\mathcal{Z}_{1:k}})=\text{true}$}
                \STATE \textbf{continue}. //(Skip replanning)
            \ENDIF
        \ENDIF
        \STATE \textbf{break}. //(Trigger replanning)
    \ENDFOR
\ENDWHILE
\end{algorithmic}
\end{minipage}%
}
\end{algorithm}

\section{Proof of Theorem~\ref{theorem:bound-step-2}}
\label{proof:theorem_3}

\begin{proof}

We will prove the lower bound first. The proof for the upper bound follows a similar approach. The core of the proof is to relate the posterior belief $b_k$ at a future time step $k$ to the belief propagated open-loop from the initial belief $b_0$.

Let $b_k(x_k) = P(x_k | h_k)$ be the posterior belief at time $k$, where the history is $h_k = \{a_{0:k-1}, z_{1:k}\}$. Let $b_k^-(x_k) = P(x_k | h_k^-)$ be the belief propagated up to time $k$ before incorporating the observation $z_k$, where $h_k^- = \{a_{0:k-1}, z_{1:k-1}\}$. The relationship is given by Bayes' rule:
\begin{align}
    b_k(x_k) = \frac{P(z_k|x_k) b_k^-(x_k)}{P(z_k|h_k^-)}.
\end{align}
By recursively applying this, we can relate $b_k(x_k)$ to the initial belief $b_0(x_0)$ and the open-loop propagated belief $\phi^p(b_0, a_{0:k-1})(x_k)$:
\begin{align}
    b_k(x_k) = \frac{\prod_{j=1}^k P(z_j|x_j)}{\prod_{j=1}^k P(z_j|h_j^-)} \phi^p(b_0, a_{0:k-1})(x_k).
\end{align}
Let $c_j = \frac{\min_{z_j\in\mathcal{Z}, x_j\in\mathcal{X}} P(z_j|x_j)}{\max_{z_j\in\mathcal{Z}, x_j\in\mathcal{X}} P(z_j|x_j)}$, for $P(z_j|x_j)>0$. We can bound the ratio of the belief distributions:
\begin{align}
    \prod_{j=1}^k c_j \cdot \phi^p(b_0, a_{0:k-1})(x_k) \le b_k(x_k) \le \frac{1}{\prod_{j=1}^k c_j} \cdot \phi^p(b_0, a_{0:k-1})(x_k).
\end{align}
This gives us \begin{align}
C_k(\mathcal{Z}, \mathcal{X}) \phi^p(b_0, a_{0:k-1}) \le b_k \le \frac{1}{C_k(\mathcal{Z}, \mathcal{X})} \phi^p(b_0, a_{0:k-1}).
\label{eq11}
\end{align}

Now, let's analyze the Q-value. Assume state-dependent rewards and positive Q-value. From Theorem~\ref{theorem:open-loop2}, the original Q value is lower-bounded by the AOL Q-value, and we can have the following relationship by directly applying ~\eqref{eq11} :
\begin{align}
    Q^{\pi^*}(b_k, a_k) \ge Q^{\pi^{AOL,\tau,*}}(b_k, a_k) \ge C_k(\mathcal{Z}, \mathcal{X}) Q^{\pi^{\text{AOL},\tau*}}(\phi^p(b_0, a_{0:k-1}), a_k).
\end{align}
The term $Q^{\pi^{\text{AOL},\tau*}}(\phi^p(b_0, a_{0:k-1}), a_k)$ is the expected value of a plan of length $L$ starting from belief $\phi^p(b_0, a_{0:k-1})$. This can be rewritten in terms of a longer plan starting from $b_0$:
\begin{align}
    Q^{\pi^{\text{AOL},\tau*}}(\phi^p(b_0, a_{0:k-1}), a_k) = \tilde{Q}^{\pi^{\text{AOL},\tau*}}_{L+k}(b_0, a_{0:k-1},a_k) - \sum_{i=0}^{k-1} \mathbb{E}[r(b_i, a_i)],
\end{align}
where $\tilde{Q}^{\pi^{\text{AOL},\tau*}}_{L+k}(b_0, a_{0:k-1},a_k)$ is the value of a plan of horizon $L+k$ from $b_0$ where the first $k$ actions are fixed to $a_{0:k-1}$.
Combining these gives the lower bound:
\begin{align}
    \text{lb}^k(\tau, b_0, a_{0:k}) = C_k(\mathcal{Z}, \mathcal{X}) \left( \tilde{Q}^{\pi^{\text{AOL},\tau*}}_{L+k}(b_0, a_{0:k-1},a_k) - \sum_{i=0}^{k-1} \mathbb{E}[r(b_i, a_i)] \right).
\end{align}
The proof for the upper bound, $\text{ub}^k(\tau, b_0, a_{0:k})$, follows symmetrically using the upper bound on the belief ratio and the upper bound from Theorem~\ref{theorem:open-loop2}. 

Finally, the state space $\CX$ can be tighten to be the reachable space $\CX^R$.
This completes the proof.
\end{proof}

\section{Proof of Theorem~\ref{theorem:bound-step-2-z-region}}
\label{proof:theorem_4}
\begin{proof}
The proof follows the same structure as the proof of Theorem~\ref{theorem:bound-step-2}. The key difference lies in the bounding of the posterior belief $b_k$.

In the proof of Theorem~\ref{theorem:bound-step-2}, the belief ratio is bounded over the entire observation space $\mathcal{Z}$. For this theorem, we are given that the future observations $z_k$ will fall within the allowed observation sets $\bar{\mathcal{Z}}_k$.
Therefore, the belief ratio bound can be tightened by replacing the full observation space $\mathcal{Z}$ with the subset $\bar{\mathcal{Z}}_k$ when defining the constant factor. The ratio of belief distributions is now bounded by:
\begin{align}
    {C}_k(\bar{\mathcal{Z}}_k, \mathcal{X}^R) \phi^p(b_0, a_{0:k-1}) \le b_k \le \frac{1}{{C}_k(\bar{\mathcal{Z}}_k, \mathcal{X}^R)} \phi^p(b_0, a_{0:k-1}),
\end{align}
where ${C}_k(\bar{\mathcal{Z}}_k, \mathcal{X}^R)$ is defined as in Theorem~\ref{theorem:bound-step-2-z-region} but with the `min` and `max` operations taken over the observation subset $\bar{\mathcal{Z}}_k$.

The remainder of the proof follows directly from the steps in the proof of Theorem~\ref{theorem:bound-step-2}, by substituting $C_k(\mathcal{Z}, \mathcal{X}^R)$ with the tighter factor ${C}_k(\bar{\mathcal{Z}}_k, \mathcal{X}^R)$. This yields the desired bounds $\bar{lb}^{k}$ and $\bar{ub}^{k}$.
\end{proof}

\section{Experimental Details}
\label{appendix:experimental-details}
\paragraph{Beacon Navigation Problem.} 
\begin{figure}[ht]
    \centering
    \includegraphics[width=0.45\textwidth]{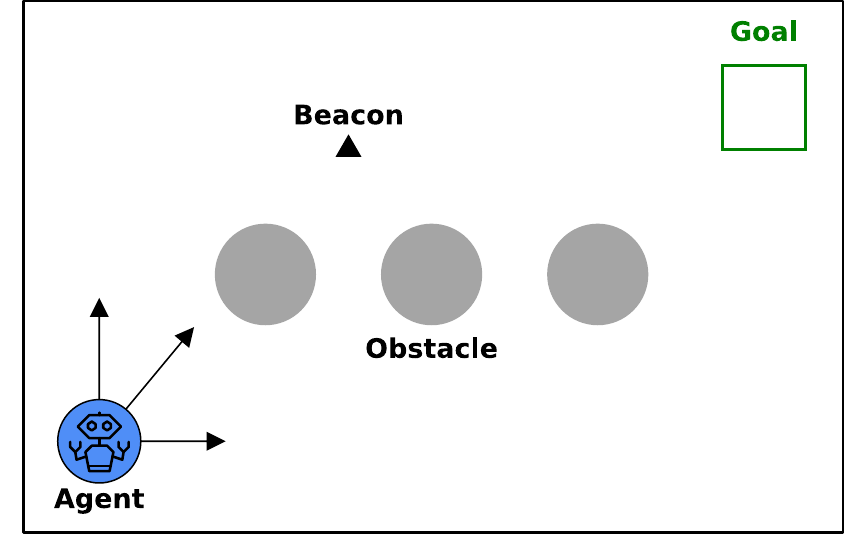}
    \caption{Beacon navigation problem.}
    \label{fig:beacon-exp}
\end{figure}
The beacon navigation problem is a POMDP where an agent navigates a grid world to reach a target beacon while avoiding obstacles. The agent can move in four directions (up, down, left, right) and can observe the distance to the beacon. The goal is to reach the beacon while maximizing the cumulative reward. The observation is based on the beacons, where the observation noise is defined as the distance to the beacon plus some noise. Figure~\ref{fig:beacon-exp} illustrates the problem setup.

The transition model is defined as follows:
\begin{align}
    P(x'|x,a) = \begin{cases}
        p^T_{int} & \text{if } x' = x + a \text{ and } x' \text{ is within bounds,} \\
        p^T_{adj} & \text{if } \  |x'-x-a|=1, \\
        p^T_{stay} & \text{if } x'=x, \\
        0 & \text{otherwise.}
    \end{cases}
\end{align}
Here, $p^T_{int}$ is the probability of moving to the intended cell, $p^T_{adj}$ is the probability of moving to an adjacent cell, and $p^T_{stay}$ is the probability of staying in the same cell. And $p^T_{int} + p^T_{adj} + p^T_{stay} = 1$.

The observation model is defined as:
\begin{align}
    P(z|x) = \begin{cases}
        1-p^O_{error} & \text{if } z=x \text{ and within beacon range,} \\
        \frac{p^O_{error}}{4} & \text{if } |z-x|=1 \text{ and within beacon range,}\\
        0 & \text{otherwise.}
    \end{cases}
\end{align}
Here, the factor $p^O_{error}$ represents the observation noise, which is based on the distance to the nearest beacon: $p^O_{error}=\min\{0.9, 1- d_{beacon}*0.15\}$, where $d_{beacon}$ is the distance to the nearest beacon. 

The reward function is defined as:
\begin{align}
    r(x,a) = 1_{x\in X^{goal}} \cdot r^{goal} + 1_{x\in X^{obstacle}} \cdot r^{obstacle} + r^{step} + r^{dis},
\end{align}
where $1_{x\in X^{goal}}$ is an indicator function that returns 1 if the agent is in the goal state set ($X^{goal}$), $r^{goal}$ is the reward for reaching the goal, $1_{x\in X^{obstacle}}$ is an indicator function that returns 1 if the agent is in an obstacle state set ($X^{obstacle}$), $r^{obstacle}$ is the penalty for hitting an obstacle, $r^{step}$ is a small negative reward for each step taken, and $r^{dis}$ is a reward based on the distance to the goal ($d_{goal}$) as: $r^{dis}=\frac{15}{1+d_{goal}}$.

In the experiments, we set the parameters as follows: $p^T_{int}=0.5$, $p^T_{adj}=0.2$, $p^T_{stay}=0.3$, $p^O_{error}=0.1$, $r^{goal}=200$, $r^{obstacle}=-30$, and $r^{step}=-0.5$. The grid size is $20\times 20$ with one beacon at $(3,3)$ and obstacles at $(2,3), (2,4), (9,3)$. The goal is at $(7,5)$. The agent starts at $(1,3)$.

The experiments are conducted on Ubuntu 22.04 with an Intel i9-9820X CPU and 64GB RAM. The code is implemented in Julia.

\section{Open-loop Simplification Experiment}
This section presents detailed results from the open-loop simplification experiment, including cumulative reward analysis and discussion of result standard deviation.
\subsection{AT-SparsePFT Experiment Figure}
This section provides additional visualization of the cumulative rewards at each step for both the baseline SparsePFT method and our proposed AT-SparsePFT method, as shown in Figure~\ref{fig:cumlative-reward-appendix}. 
\begin{figure}[h]
    \centering
    \includegraphics[width=0.45\textwidth]{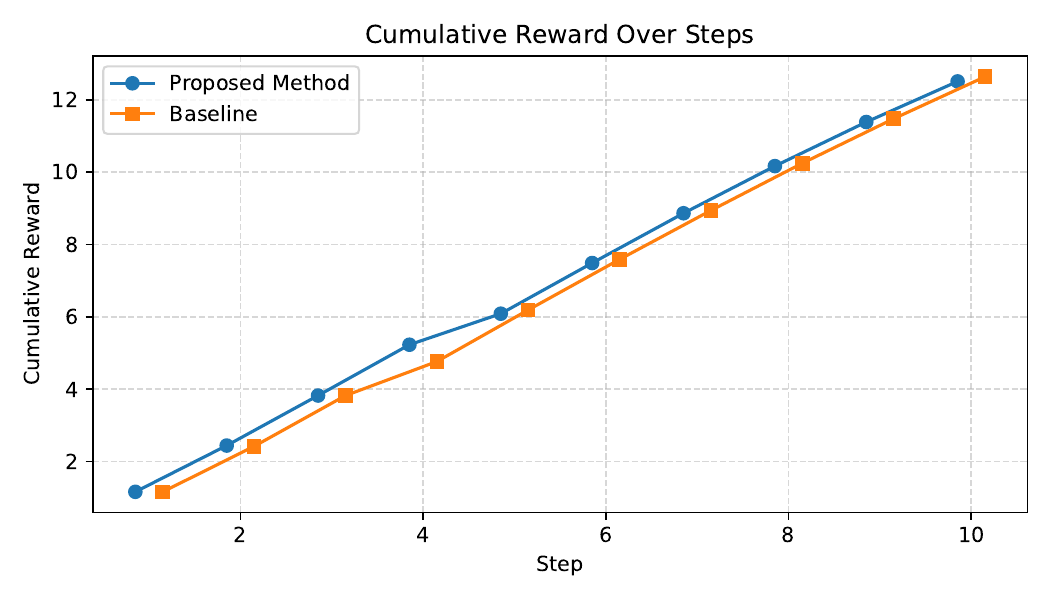}
    \vspace{-0.1cm}
    \caption{Cumulative rewards at each step for the baseline, SparsePFT, and our proposed method, AT-SparsePFT. Results are averaged over $100$ simulations. 
    }
    \label{fig:cumlative-reward-appendix}
\end{figure}

Fig.~\ref{fig:cumlative-reward-appendix} illustrates the cumulative rewards at each time-step for both the baseline sparsePFT and our proposed AT-SparsePFT. Results represent averages across $100$ independent simulations.
 Our proposed method achieves performance nearly identical to the baseline, 
 but shows a significant speedup while adapting topology online, empirically validating our theoretical performance guarantees.

\subsection{Results with Standard Deviation Analysis}
Table~\ref{tab:simulation-results-appendix} presents the cumulative rewards and runtime performance for the baseline sparse sampling (SS) method and our proposed approach (including mean and std of the results). Our method demonstrates a speedup ratio of $16.7\times$ compared to the baseline SS method while maintaining comparable cumulative rewards.

Both methods exhibit similar levels of standard deviation in cumulative rewards, with our proposed method demonstrating marginally lower variance. Given the stochastic nature of the problem, which incorporates noise in both transition and observation models, elevated standard deviation values are expected. Notably, our method maintains performance consistency comparable to the baseline, validating the proposed performance guarantees. The observed standard deviation in both methods is primarily attributed to a single collision event among 100 simulation runs, arising from the inherent stochasticity in the motion and observation models.

\begin{table}[htb]
    \centering
    \resizebox{0.65\textwidth}{!}{
    \begin{tabular}{lcccc}
        \hline
        Method & Returns  & Runtime (s) & Speedup Ratio \\
        \hline
        Baseline (SparsePFT)  & $12.64\pm6.02 $ & $345.9\pm6.4 $ & $1.0\times$ \\
        AT-SparsePFT            & $12.50\pm 5.42 $ & $\mathbf{20.66 \pm 5.0}$ & $\mathbf{16.7\times}$ \\
        \hline
    \end{tabular}
    }
    \caption{Cumulative rewards and runtime of the baseline SparsePFT method and our proposed method. The speedup ratio represents the ratio of baseline runtime to proposed method runtime. Results report mean values and standard deviations (std) computed over 100 independent 10-step simulation trials.}
    \label{tab:simulation-results-appendix}
\end{table}

\subsection{AT-POMCP Experiments}

Figure~\ref{fig:mcts-rewards-growing-time} provides a visual summary of the experiment results for the baseline POMCP and our proposed AT-POMCP method.

\begin{figure}[htb]
    \centering
    \includegraphics[width=0.65\textwidth]{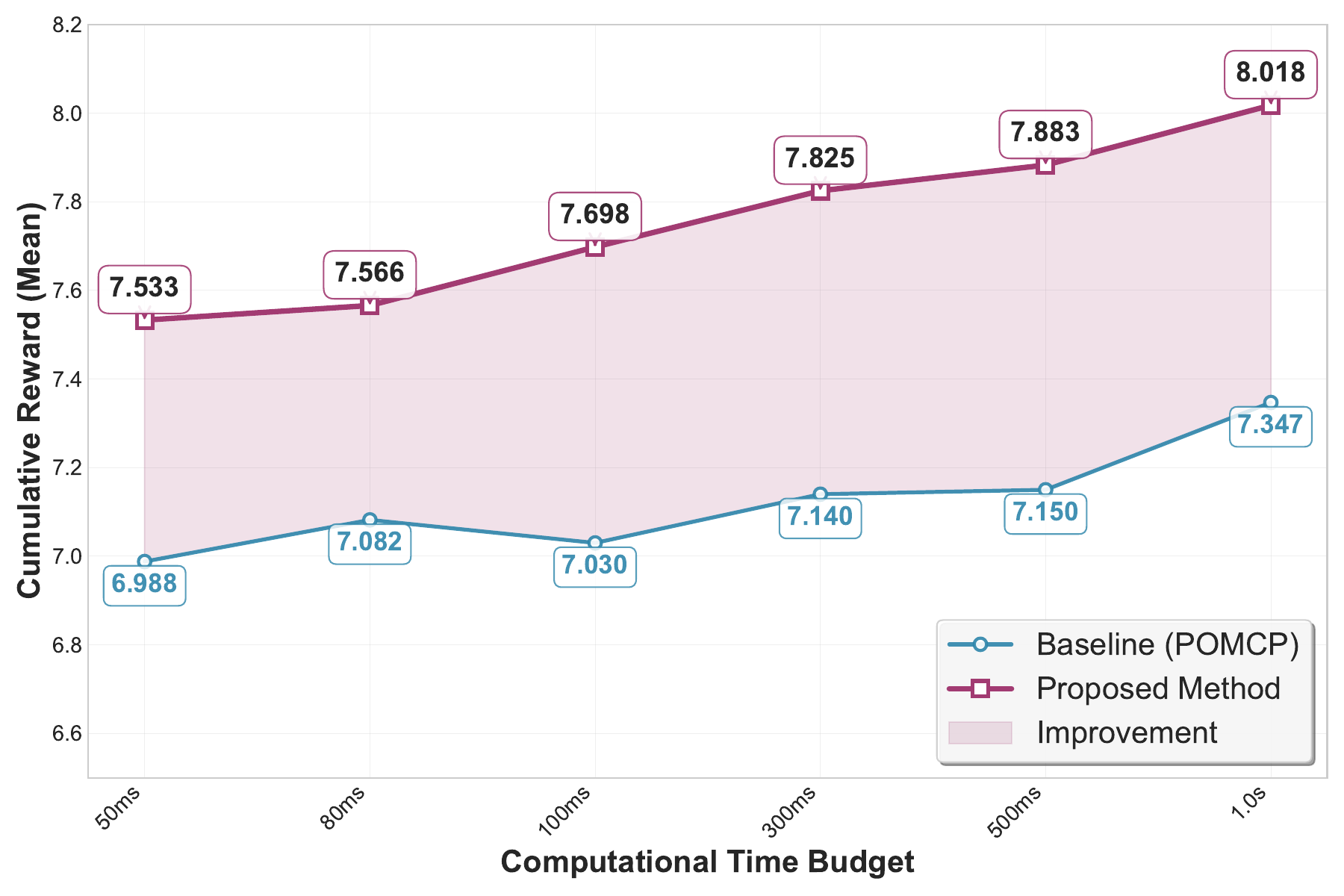}
    \caption{cumulative reward comparison between the baseline POMCP and the proposed method. The plot shows mean cumulative rewards for budgets ranging from $50\,\text{ms}$ to $1\,\text{s}$, highlighting consistent performance gains for the proposed method.}
    \label{fig:mcts-rewards-growing-time}
\end{figure}

\section{Replanning Skipping Experiments}
This section presents a runtime analysis incorporating realistic execution and observation time.



\subsection{Runtime analysis for skipping replanning check}

\begin{table}[htb]
\centering
    \resizebox{0.99\textwidth}{!}{
\begin{tabular}{lcccc}
\hline
Method & Planning Time (s) & Replanning Time (s) & Execution Time (s) & Total Time (s) \\
\hline
Skip Replanning (Ours) &  220.64 & 80.70 & 150 & \textbf{230.70} \\
Baseline (Always Replan) &  90.58 & 90.58 & 150 & 240.58 \\
\hline
\end{tabular}
    }
\caption{Runtime comparison between our replanning skip method and baseline approach over 100 trials. Execution and observation time per time-step: $1.5$s.}
\label{tab:replanning-runtime-appendix}
\end{table}

We evaluate a scenario where action execution and observation acquisition require $1.5$s per time-step. Table~\ref{tab:replanning-runtime-appendix} presents the runtime analysis for our replanning skip approach. Our method achieves a total runtime of $230.70$s, comprising $220.64$s for planning (including $80.70$s for replanning phases), compared to the baseline's $240.58$s total runtime.

Our approach achieves a $24\%$ replanning skip rate, indicating that replanning can be safely omitted in approximately one-quarter of all decision steps. While this theoretically corresponds to a $24\%$ reduction in replanning overhead, the current implementation realizes a smaller improvement due to unoptimized code. Future optimization efforts are expected to enhance runtime performance.





\end{document}